\theoremstyle{plain}
\newtheorem{theorem}{Theorem}[section]
\newtheorem{lemma}[theorem]{Lemma}
\theoremstyle{definition}
\theoremstyle{remark}
\newcommand\our{\textsc{DeepNet}}
\newcommand\deepnorm{\textsc{DeepNorm}}
\newcommand\postln{Post-LN}
\newcommand\preln{Pre-LN}
\title{DeepNet: Scaling Transformers to 1,000 Layers}
\author{{Hongyu Wang\thanks{Equal contribution. Work was done during Hongyu's internship at Microsoft Research.}~~~~Shuming Ma\footnotemark[1]~~~~Li Dong~~~Shaohan Huang~~~Dongdong Zhang~~~Furu Wei\thanks{Corresponding author \textless{}\href{mailto:fuwei@microsoft.com}{fuwei@microsoft.com}\textgreater{} .}} \\
Microsoft Research \\
{\url{https://github.com/microsoft/unilm}}
}
\begin{document}

\maketitle

\begin{abstract}
In this paper, we propose a simple yet effective method to stabilize extremely deep Transformers. Specifically, we introduce a new normalization function (\deepnorm{}) to modify the residual connection in Transformer, accompanying with theoretically derived initialization.
In-depth theoretical analysis shows that model updates can be bounded in a stable way.
The proposed method combines the best of two worlds, i.e., good performance of \postln{} and stable training of \preln{}, making \deepnorm{} a preferred alternative.
We successfully scale Transformers up to 1,000 layers (i.e., 2,500 attention and feed-forward network sublayers) without difficulty, which is one order of magnitude deeper than previous deep Transformers. Remarkably, on a multilingual benchmark with 7,482 translation directions, our 200-layer model with 3.2B parameters significantly outperforms the 48-layer state-of-the-art model with 12B parameters by 5 BLEU points, which indicates a promising scaling direction.
\end{abstract}


\begin{figure}[htbp]
\centering
\includegraphics[width=\columnwidth]{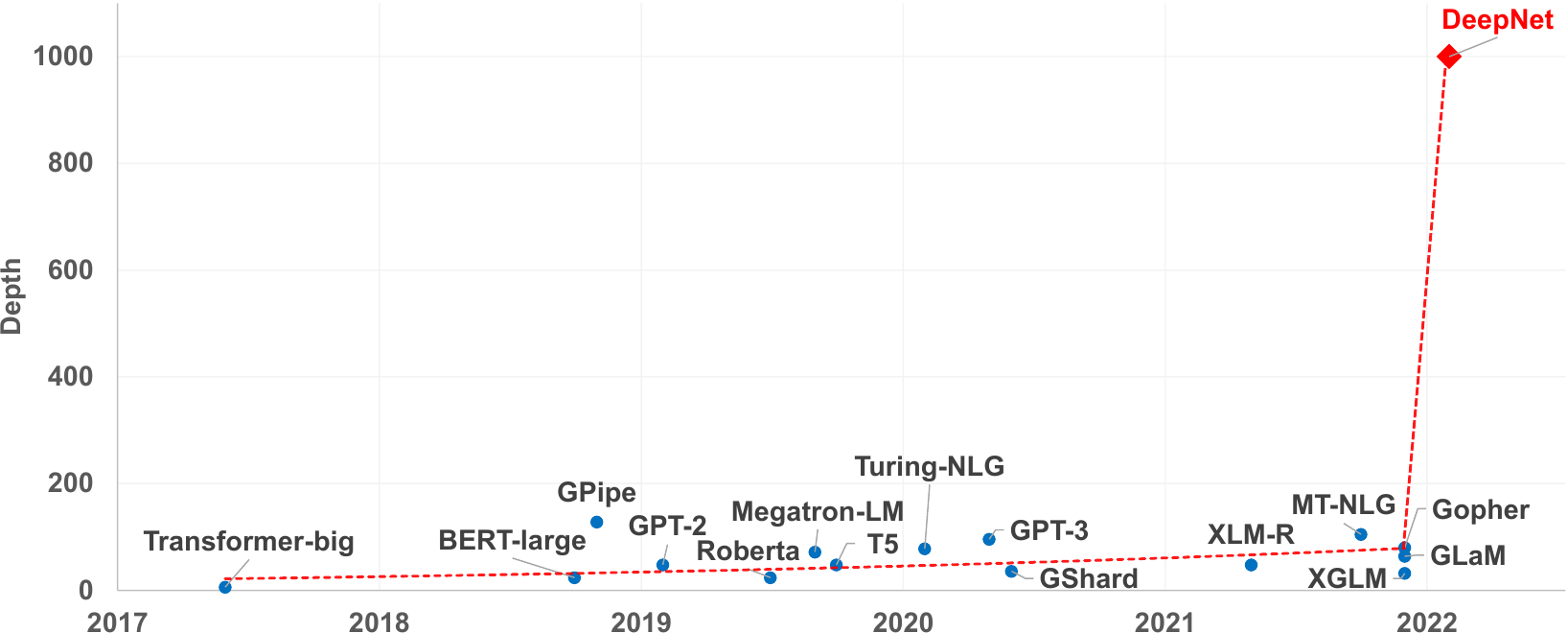}
\caption{Trend of Transformer depths of state-of-the-art NLP models over time.}
\label{depth_trend}
\end{figure}

\newpage
\section{Introduction}
\label{sec:intro}

Recent years have witnessed a trend towards large-scale Transformer~\citep{transformer} models. The capacity has substantially increased from millions of parameters~\citep{JacobDevlin2018BERTPO,xlmr} to billions~\citep{gpt-2,gpt3,gpipe,t5,gshard,gopher,xglm,mt-nlg}, and even trillions~\citep{glam}. Large-scale models yield state-of-the-art performance on a wide range of tasks, and show impressive abilities in few-shot and zero-shot learning. Despite an enormous number of parameters, their depths (as shown in \cref{depth_trend}) are limited by the training instability of Transformers.

\citet{ToanQNguyen2019TransformersWT} find that pre-norm residual connections (\preln{}) improve the stability of Transformers based on post-norm connections (\postln{}). However, the gradients of \preln{} at bottom layers tend to be larger than at top layers~\citep{Normformer2021}, leading to a degradation in performance compared with \postln{}.
In order to alleviate the above issue, there have been efforts on improving the optimization of deep Transformer by means of better initialization~\citep{BiaoZhang2019ImprovingDT,HongyiZhang2019FixupIR,XiaoShiHuang2020ImprovingTO}, or better architecture~\citep{Wang2019DLCL,LiyuanLiu2020UnderstandingTD,rezero2020,Normformer2021}.
These approaches can stabilize a Transformer model with up to hundreds of layers. Yet, none of previous methods has been successfully scaled to 1,000 layers.

Our aim is to improve the training stability of Transformers and scale the model depth by orders of magnitude. To this end, we study the cause of unstable optimization, finding the exploding model update is responsible for the instability.
Motivated by the above observation, we introduce a new normalization function (\deepnorm{}) at residual connections~\citep{resnet}, which has theoretical justification of bounding the model update by a constant.
The proposed method is simple yet effective, with just lines of code change.
The approach improves the stability of Transformers so that we are able to scale model depth to more than 1,000 layers.
Moreover, experimental results show that \deepnorm{} combines the best of two worlds, i.e., good performance of \postln{} and stable training of \preln{}.
The proposed method can be a preferred alternative of Transformers, not only for extremely deep (such as >1000 layers) models, but also for existing large models.
Notably, our 200-layer model with 3.2B parameters achieves 5 BLEU improvement on a massively multilingual machine translation benchmark compared to state-of-the-art model~\citep{m2m100} with 48 layers and 12B model size.

\section{TL;DR for Practitioners}

\begin{figure}[htbp]
\centering
\begin{minipage}[t]{0.44\columnwidth}
\vspace{0pt}
\begin{lstlisting}[language=python,mathescape]
  def deepnorm(x):
      return LayerNorm(x * <@\textcolor{red}{$\alpha$}@> + f(x))

  def deepnorm_init(w):
      if w is ['ffn', 'v_proj', 'out_proj']:
          nn.init.xavier_normal_(w, gain=<@\textcolor{red}{$\beta$}@>)
      elif w is ['q_proj', 'k_proj']:
          nn.init.xavier_normal_(w, gain=1)
\end{lstlisting}
\end{minipage}
\hfill
\begin{minipage}[t]{0.545\columnwidth}
\vspace{0pt}
\resizebox{\columnwidth}{!}{
\begin{tabular}{l|cc|cc}
\toprule
\multirow{2}{*}{\textbf{Architectures}} & \multicolumn{2}{c|}{\textbf{Encoder}} & \multicolumn{2}{c}{\textbf{Decoder}} \\
&  $\alpha$ & $\beta$ & $\alpha$ & $\beta$ \\
\midrule
\text{Encoder-only} & \multirow{2}{*}{$(2N)^{\frac{1}{4}}$} & \multirow{2}{*}{$(8N)^{-\frac{1}{4}}$} & \multirow{2}{*}{-} & \multirow{2}{*}{-} \\
\text{(e.g., BERT)} & & & & \\
\text{Decoder-only} &  \multirow{2}{*}{-} &  \multirow{2}{*}{-} & \multirow{2}{*}{$(2M)^{\frac{1}{4}}$} &  \multirow{2}{*}{$(8M)^{-\frac{1}{4}}$} \\
\text{(e.g., GPT)} & & & & \\
\text{Encoder-decoder} &  \multirow{2}{*}{$0.81(N^4M)^{\frac{1}{16}}$} &  \multirow{2}{*}{$0.87(N^4M)^{-\frac{1}{16}}$} &  \multirow{2}{*}{$(3M)^{\frac{1}{4}}$} &  \multirow{2}{*}{$(12M)^{-\frac{1}{4}}$} \\
\text{(e.g., NMT, T5)} & & & & \\
\bottomrule
\end{tabular}
}
\end{minipage}
\caption{(a) Pseudocode for \deepnorm{}. We take Xavier initialization~\citep{xavier} as an example, and it can be replaced with other standard initialization. Notice that $\alpha$ is a constant. (b) Parameters of \deepnorm{} for different architectures ($N$-layer encoder, $M$-layer decoder).}
\label{implementation}
\end{figure}

As shown in \cref{implementation}, it is simple to implement our method based on Transformers with \postln{}.
Compared to \postln{}, \deepnorm{} up-scales the residual connection before performing layer normalization. Besides, we down-scale the parameters during initialization.
Notably, we only scale the weights of feed-forward networks, as well as the value projection and the output projection of attention layers.
Moreover, the scales of residual connection and initialization are dependent on the architecture (\cref{implementation}). We provide more details in~\cref{set:imple}.

\section{Instability of Deep Transformer}
\label{instability}

We study the causes of the instability for deep Transformers. Our analysis begins with the observation: better initialization methods stabilize the training of Transformer. This has also been verified by previous work~\citep{BiaoZhang2019ImprovingDT,XiaoShiHuang2020ImprovingTO, PengXu2021OptimizingDT}. Therefore, we study the training process of \postln{} with or without proper initialization.
With better initialization, we down-scale the weights of $l$-th layer by $k_l = N - l + 1, l \in [1, N]$ after performing Xavier initialization. For example, the output projection $W^{l}_{o}$ of FFN in $l$-th layer is initialized as:

\begin{equation}
    W^{l}_{o} \backsim \mathcal{N} \left(0, \frac{1}{k_l^2d'} \right) \notag,
\end{equation}

where $d'$ is an average of input and output dimensions. We name this model \postln{}-init. Notice that different from the prior work~\citep{BiaoZhang2019ImprovingDT}, we narrow the scale of lower layers instead of the higher layers. We believe that it helps to separate the effect of the gradient scale from the model update. Besides, \postln{}-init has the same architecture as \postln{}, which eliminates the impact from the architecture.

\begin{figure}[t]
\begin{center}
{
\includegraphics[width=0.3\columnwidth]{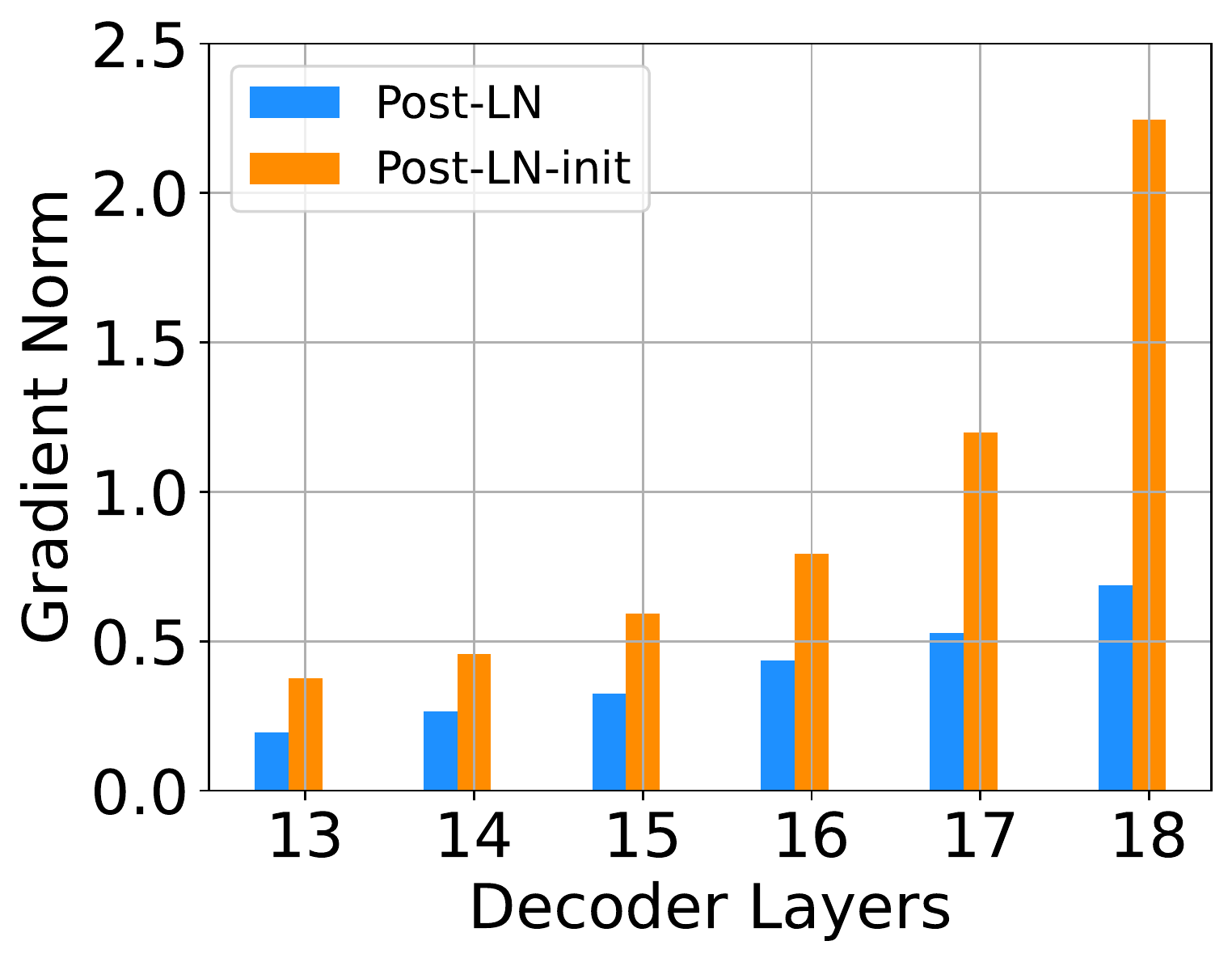}
}
{
\includegraphics[width=0.3\columnwidth]{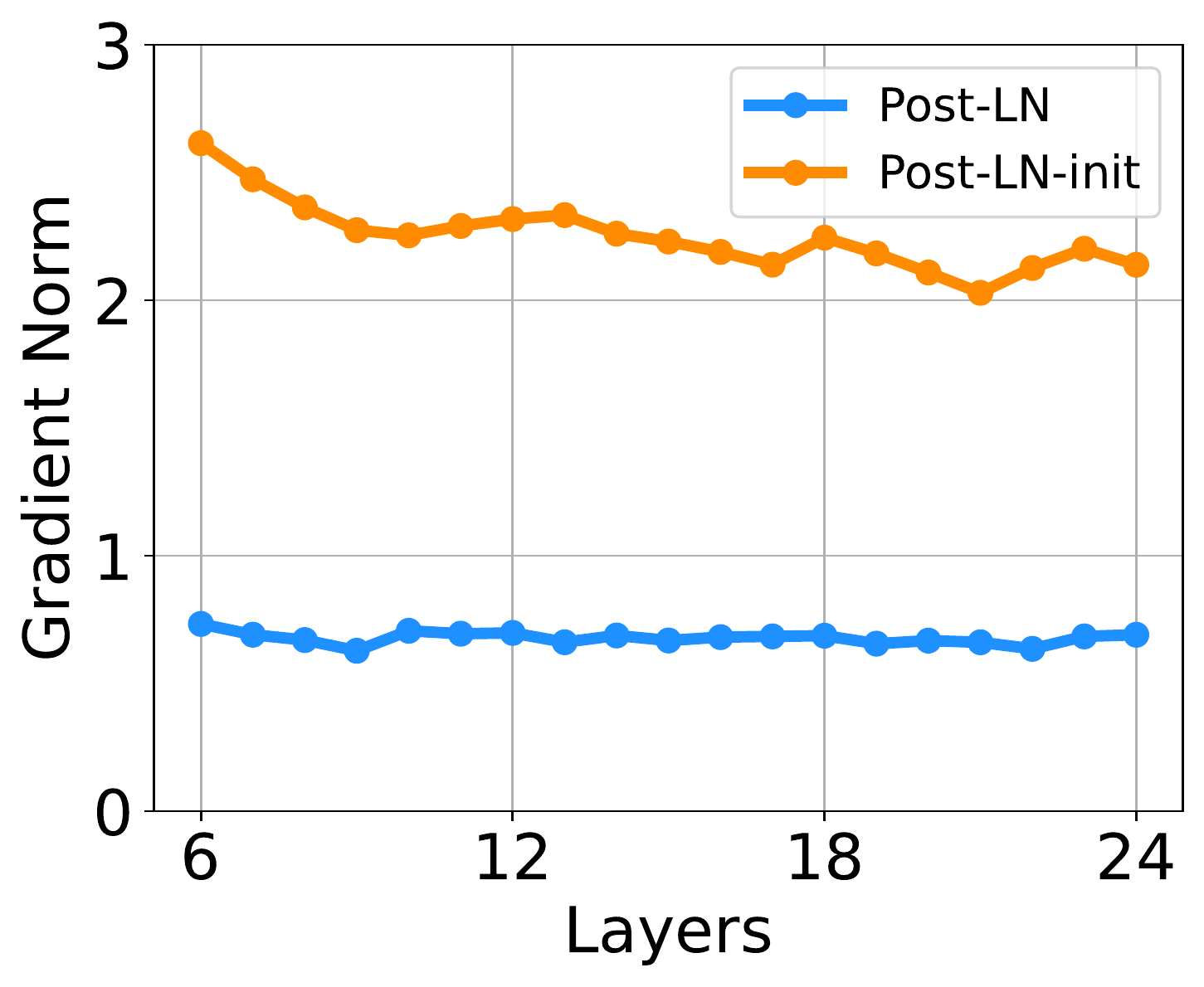}
}
{
\includegraphics[width=0.3\columnwidth]{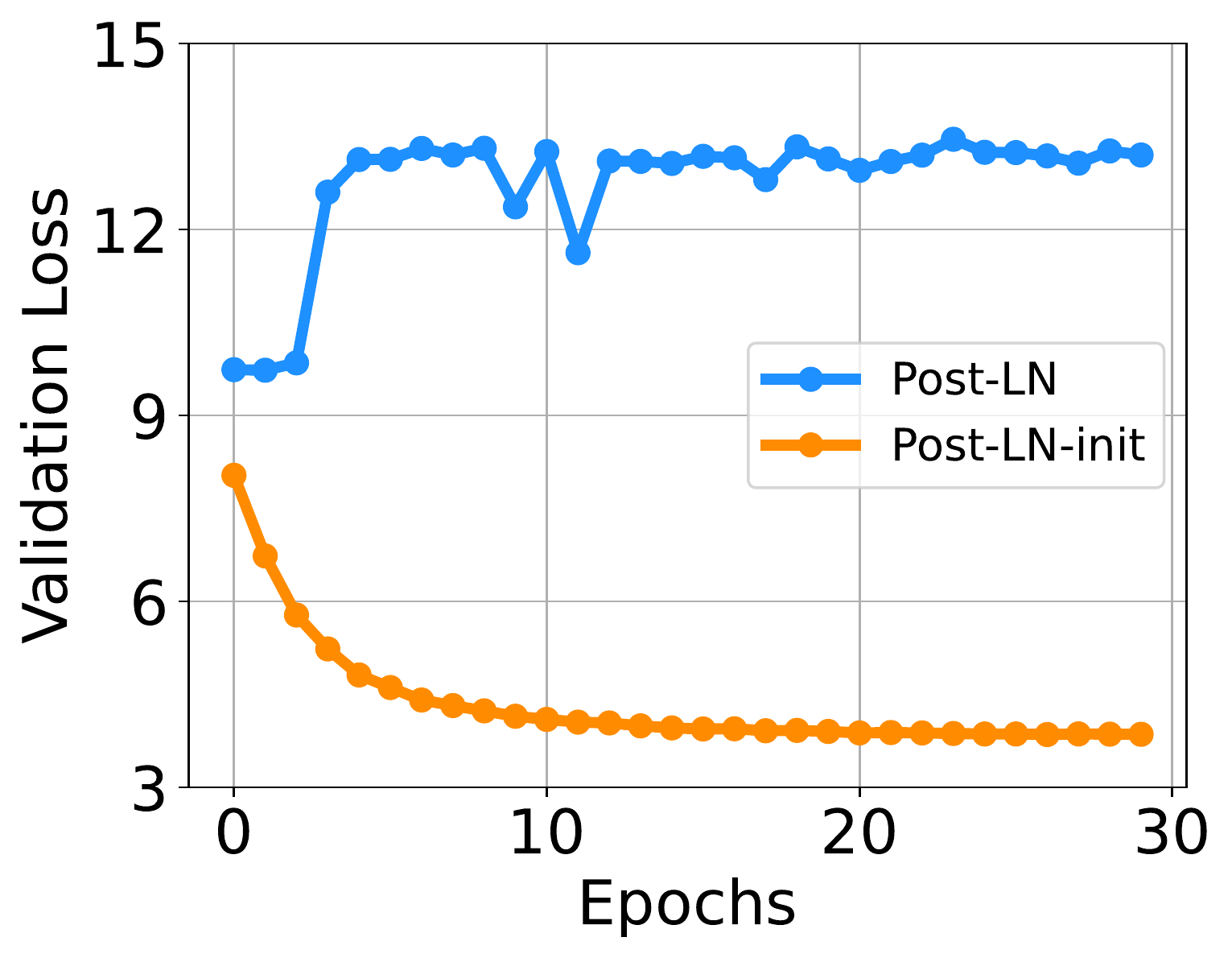}
}
\caption{(a) Gradient norm in the top layers of 18L-18L models. (b) Gradient norm in the last layer of the models with depths varying from 6L-6L to 24L-24L. (c) Validation loss curves of 18L-18L models.}\label{grad_and_valid_loss}
\end{center}
\end{figure}

\begin{figure}[t]
\begin{center}
\subfigure[Accumulated model update]{
\includegraphics[width=0.3\columnwidth]{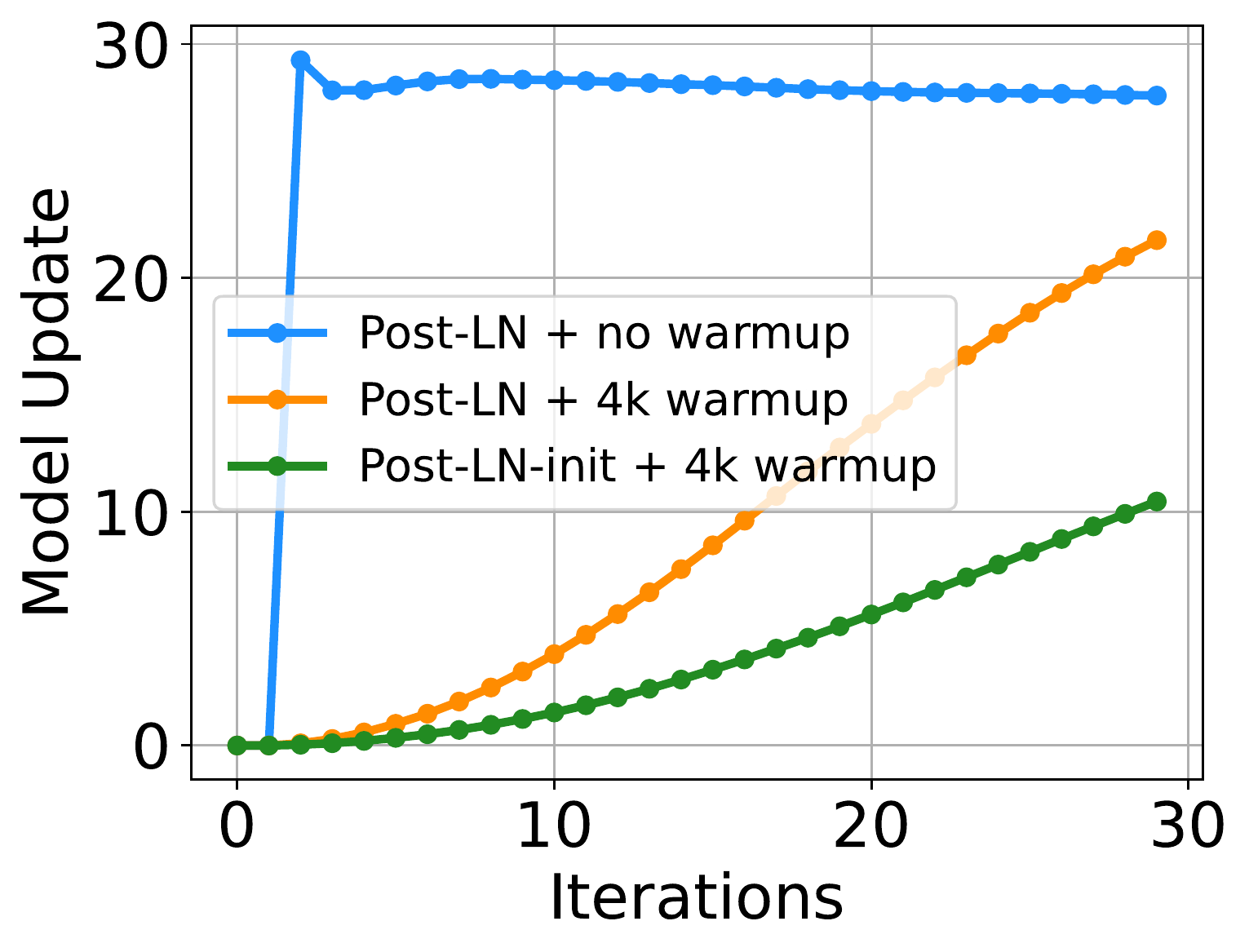}
\label{plot_model_update}
}
\subfigure[Input from FFN to LN]{
\includegraphics[width=0.3\columnwidth]{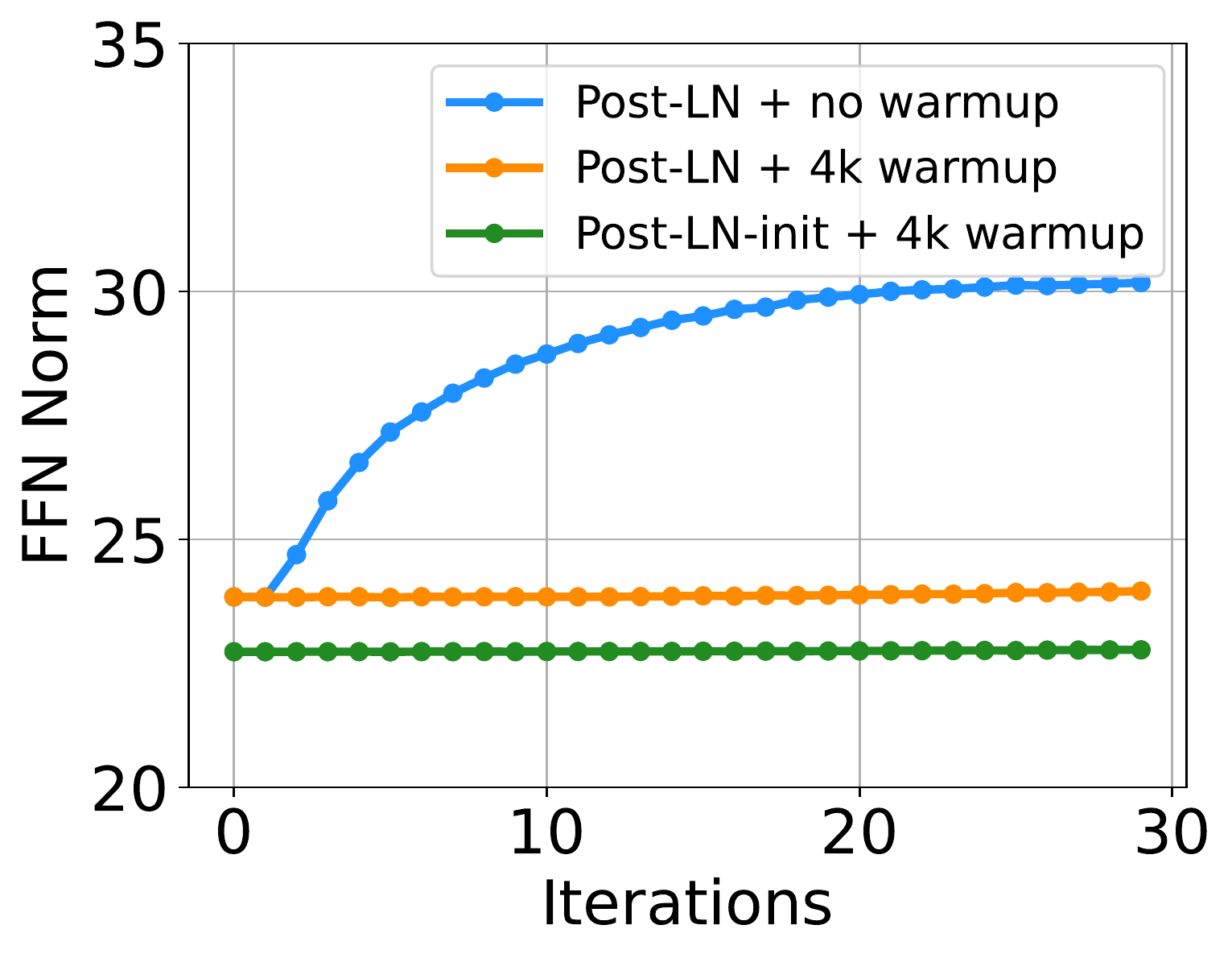}
\label{plot_mean_norm_ffn}
}
\subfigure[Input from attention to LN]{
\includegraphics[width=0.3\columnwidth]{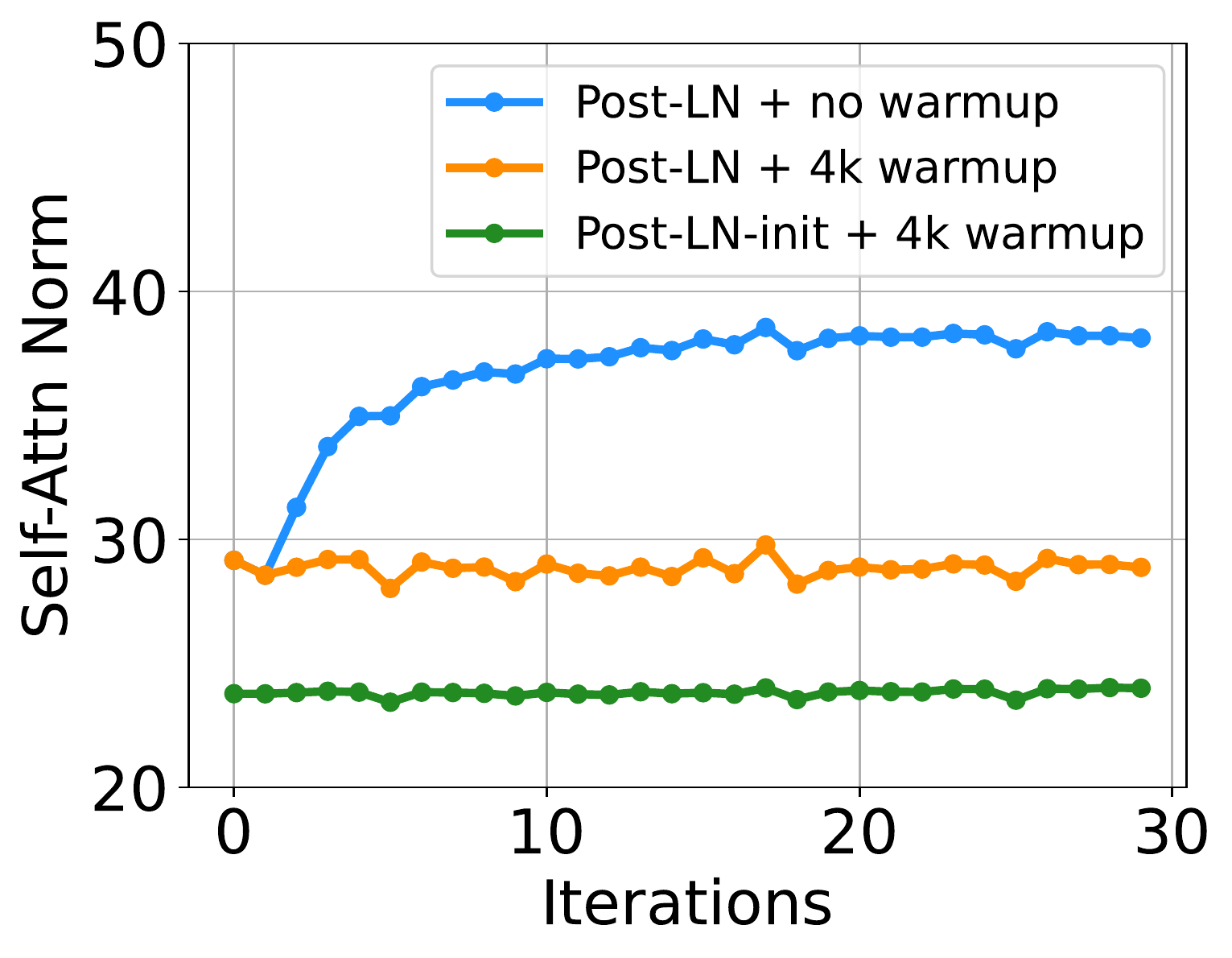}
\label{plot_mean_norm_self_attn}
}
\subfigure[Gradient norm in all decoder layers]{
\includegraphics[width=0.9\columnwidth]{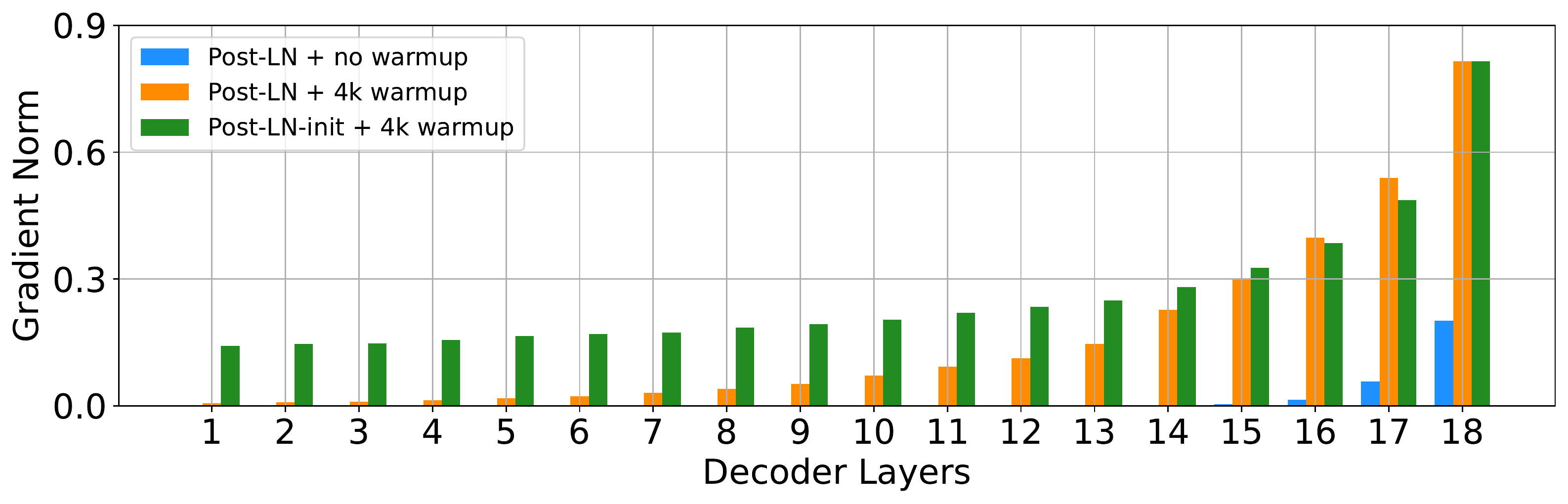}
\label{his_grad_vanishing}
}

\caption{Visualization of the model update, the average input of LNs, and the gradients for the 18L-18L models at the early stage of training.}
\end{center}
\end{figure}

We train 18L-18L \postln{} and 18L-18L \postln{}-init on the IWSLT-14 De-En machine translation dataset.
\cref{grad_and_valid_loss} visualizes their gradients and validation loss curves.
As shown in \cref{grad_and_valid_loss}(c), \postln{}-init converged while \postln{} did not.
\postln{}-init has an even larger gradient norm in the last several layers, although its weights have been scaled down. Furthermore, we visualize the gradient norm of the last decoder layer with varying model depth from 6L-6L to 24L-24L.
\cref{grad_and_valid_loss} shows that the gradient norm of \postln{}-init in the last layer is still much larger than that of \postln{}, regardless of model depth.
It concludes that the exploding gradients in deep layers should not be the root cause of instability of \postln{}, while the scale of model update tends to account for it.

Then we demonstrate that the instability of \postln{} comes from a chain of several issues, including gradient vanishing as well as too large model updates. As shown in~\cref{plot_model_update}, we first visualize the norm of model update $|| \Delta F ||$ at the early stage of training:
\begin{equation*}
|| \Delta F || = || F(x, \theta_i) - F(x, \theta_0) ||, 
\end{equation*} 
where $x$ and $\theta_i$ denotes input, and model parameters after $i$-th updates.
\postln{} has an exploding update at the very beginning of training, and then nearly no update shortly. It indicates that the model has been stuck in a spurious local optima. Both warm-up and better initialization help alleviate this issue, enabling the model to update smoothly.
When the update explodes, the inputs to LN become large (see \cref{plot_mean_norm_ffn} and \cref{plot_mean_norm_self_attn}).
According to the theoretical analysis from \citet{RuibinXiong2020OnLN}, the magnitude of gradient through LN is inversely proportional to the magnitude of its input:
\begin{equation*}
\label{xiong_eq}
||\cfrac{\partial LN(x)}{\partial x}|| = \mathcal{O}(\cfrac{\sqrt{d}}{||x||}).
\end{equation*} 
\cref{plot_mean_norm_ffn} and \cref{plot_mean_norm_self_attn} show that $||x||$ is significantly larger than $\sqrt{d}$ $(d=512)$ without warm-up or proper initialization. This explains the gradient vanishing problem occurred in the training of \postln{} (see \cref{his_grad_vanishing}).

Above all, the instability starts from the large model update at the beginning of training. It renders the model trapped in a bad local optima, which in turn increases the magnitude of inputs to each LN. 
As training continues, the gradient through LN becomes increasingly small, thus resulting in severe gradient vanishing.
The vanishing gradients make it difficult to escape from the local optima, and further destabilize the optimization.
On the contrary, \postln{}-init has relatively small updates, and the inputs to LN are stable. This relieves suffering from gradient vanishing, making optimization more stable.

\section{\our{}: Extremely Deep Transformers}
\label{sec:method}

In this section, we introduce our extremely deep Transformers named \our{}.
It can stabilize the optimization by mitigating the exploding model update problem.
We first provide the estimation of the expected magnitude of \our{}'s model update.
Then we provide the theoretical analysis to show that its updates can be bounded by a constant with our proposed \deepnorm{}.

\subsection{Architecture}

\our{} is based on the Transformer architecture. Compared to the vanilla Transformer, it uses our new \deepnorm{}, instead of \postln{}, for each sub-layer. The formulation of \deepnorm{} can be written as:
\begin{equation*}
    x_{l+1} = LN(\alpha x_l + G_l(x_l, \theta_l)),
\end{equation*}
where $\alpha$ is a constant, and $G_l(x_l, \theta_l)$ is the function of the $l$-th Transformer sub-layer (i.e., attention or feed-forward network) with parameters $\theta_l$.
Besides, \our{} scales the weights $\theta_l$ inside residual branches by $\beta$.
Notably, both $\alpha$ and $\beta$ are constants that only depend on the architecture, and we provide the derivation in~\cref{set:imple}.

\subsection{Expected Magnitude of Model Update}

Attention is an important part of Transformer. Without loss of generality, we study the 1-head case. Let $Q, K, V \in \mathbf{R}^{n \times d}$ denote the query, key, value, respectively. $W^Q, W^K, W^V \in \mathbf{R}^{d \times d_{k}}$ are the input projection matrices, and $W^O \in \mathbf{R}^{d_{k} \times d}$ is the output projection matrix. Then, the attention module can be formulated as:
\begin{equation*}
    Attn(Q, K, V) = softmax(\cfrac{Q W^Q (K W^K)^T}{\sqrt{d_k}}) V W^V W^O
\end{equation*}

We study the magnitude of the attention module. Lemma~\ref{lem:softmax} proves that $W^Q$ and $W^K$ do not change the bound of attention output's magnitude.

\begin{lemma}
\label{lem:softmax}
     Given $\mathbf{X} = (\mathbf{x_1}, \mathbf{x_2}, ... \mathbf{x_n})^T \in \mathbf{R}^{n \times d}$, where $var(\mathbf{x_i}) = 1$, $mean(\mathbf{x_i}) = 0$ and $q_i \in \mathbf{R}$ for all $i \in [1, n]$, it satisfies that
     \begin{equation*}
         softmax(q_1, q_2, ..., q_n) \mathbf{X} \overset{\Theta}{=} \mathbf{x_i},
     \end{equation*}
     where $\overset{\Theta}{=}$ stands for equal bound of magnitude.
\end{lemma}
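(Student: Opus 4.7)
The plan is to rewrite the softmax output as a probability vector $(p_1, \ldots, p_n)$ with $p_i \ge 0$ and $\sum_i p_i = 1$, so that $softmax(q_1, \ldots, q_n)\mathbf{X} = \sum_{i=1}^n p_i \mathbf{x_i}$ becomes a convex combination of the rows of $\mathbf{X}$. Then I would estimate its magnitude coordinate-wise using the zero-mean, unit-variance hypotheses on the $\mathbf{x_i}$, and finally compare the resulting bound to that of a single row $\mathbf{x_i}$.

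First, by linearity of expectation, $\mathrm{mean}\!\left(\sum_i p_i \mathbf{x_i}\right) = \sum_i p_i\, \mathrm{mean}(\mathbf{x_i}) = 0$, so the combination inherits the zero-mean property of an individual row. Second, under the standard independence assumption across rows (the working assumption throughout this magnitude analysis), $\mathrm{var}\!\left(\sum_i p_i \mathbf{x_i}\right) = \sum_i p_i^2\, \mathrm{var}(\mathbf{x_i}) = \sum_i p_i^2$. Combined with mean zero, this governs the magnitude.

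The crucial inequality is then $\sum_i p_i^2 \le \sum_i p_i = 1$, which holds since $p_i \in [0,1]$; this matches $\mathrm{var}(\mathbf{x_i}) = 1$ and shows that the softmax-weighted sum cannot inflate the magnitude beyond that of a single row. A trivial Cauchy--Schwarz lower bound $\sum_i p_i^2 \ge 1/n$ covers the other direction. Notice that nowhere in the computation do $W^Q$ or $W^K$ enter: they only affect the $q_i$'s, i.e., the particular values of the $p_i$'s, and the bound is uniform over admissible $p$, which is exactly the content of the lemma.

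The main obstacle is interpretational rather than computational: making precise what "equal bound of magnitude" ($\overset{\Theta}{=}$) means, since a literal two-sided $\Theta$ degenerates when the softmax is nearly uniform (then $\sum_i p_i^2 \approx 1/n$). I would therefore read the lemma as the upper-bound statement that is actually needed downstream --- the attention output has magnitude $O(\|\mathbf{x_i}\|)$ independently of $W^Q, W^K$ --- and defer the lower-bound side to the generic case where the softmax is not pathologically flat. Once that convention is fixed, the proof reduces to the two-line estimate $\sum_i p_i^2 \le \sum_i p_i = 1$ together with the linearity-of-expectation calculation above.
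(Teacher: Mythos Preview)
Your proposal is correct and reaches the needed conclusion, but the route differs from the paper's. The paper argues purely via the triangle inequality: writing the output as the convex combination $\sum_i s_i\mathbf{x_i}$, it bounds $\bigl\|\sum_i s_i\mathbf{x_i}\bigr\|\le\sum_i s_i\|\mathbf{x_i}\|$ and then uses $\mathrm{mean}(\mathbf{x_i})=0$, $\mathrm{var}(\mathbf{x_i})=1$ only to conclude that every row has the same norm (namely $\|\mathbf{x_i}\|^2=d$), so the convex combination cannot exceed $\|\mathbf{x_i}\|$. No hypothesis relating different rows is invoked. Your argument instead computes the variance of the combination and appeals to $\sum_i p_i^2\le 1$; this is quantitatively sharper (it exhibits the contraction when the softmax is diffuse), but it needs the additional orthogonality/independence assumption across rows to annihilate the cross terms in $\mathrm{var}\bigl(\sum_i p_i\mathbf{x_i}\bigr)$, which the paper's norm argument avoids. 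Your reading that only the upper bound is really used downstream, and that $W^Q,W^K$ affect only the weights $p_i$ and hence drop out of the bound, is precisely the paper's intended takeaway.
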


In other words, the magnitude of attention output only depends on the value and output projection: $Attn(Q, K, V) \overset{\Theta}{=} V W^V W^O$. In this work, we only consider the magnitude of model update, so it is sufficiently instructive to study the case where the hidden dimension equals to $1$. For simplicity, we reduce the matrices $W^V, W^O$ to the scalars $v, w$, which means $Attn(Q, K, V) \overset{\Theta}{=} vwV$. Similarly, we have $FFN(X) \overset{\Theta}{=} vwX$, where $v, w$ denotes the parameters of the feed-forward network.

We define the model update as $||\Delta F|| = || F(x, \theta^*) - F(x, \theta) ||$. Based on the analysis above, we have the following theorem to characterize $||\Delta F||$'s magnitude of an $N$-layer \our{} with $N$ attentions and FFNs.

\begin{theorem}
\label{thm:encoder}
Given an $N$-layer \our{} $F(x, \mathbf{\theta})$ ($\theta = \{\theta_1, \theta_2, ..., \theta_{2N} \} $), where $\theta_{2l-1}$ and ${\theta_{2l}}$ denote the parameters of self-attention and FFN in $l$-th layer, and each sub-layer is normalized with \deepnorm{}: $x_{l+1} = LN(\alpha x_l + G_l(x_l, \theta_l))$, $ ||\Delta F||$ satisfies: 
\begin{equation*}
    || \Delta F || \le \sum_{i=1}^{2N} \cfrac{\sqrt{v_{i}^2 + w_{i}^2}}{\alpha} || \theta_{i}^* - \theta_{i} ||
\end{equation*}
\end{theorem}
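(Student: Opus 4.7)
The plan is to convert $\|\Delta F\|$ into a sum of per-parameter sensitivities and then bound each of them using the layer-wise structure of \deepnorm{}. Interpolating $\theta(t)=(1-t)\theta+t\theta^*$ and integrating $dF/dt$ over $t\in[0,1]$ yields
\begin{equation*}
\Delta F \;=\; \sum_{i=1}^{2N}\left(\int_0^1\frac{\partial F}{\partial\theta_i}\bigl(\theta(t)\bigr)\,dt\right)(\theta_i^*-\theta_i),
\end{equation*}
so by the triangle inequality it suffices to prove the uniform per-block bound $\|\partial F/\partial\theta_i\|\le\sqrt{v_i^2+w_i^2}/\alpha$ for every $i\in\{1,\dots,2N\}$ and every $\theta(t)$ along the segment.

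I would prove that bound by chain-ruling through the \deepnorm{} blocks $x_{l+1}=LN(\alpha x_l+G_l(x_l,\theta_l))$. Lemma~\ref{lem:softmax} collapses attention to its $V W^V W^O$ branch, which together with the FFN case gives the scalar reduction $G_l(x_l,\theta_l)\overset{\Theta}{=}v_l w_l x_l$, and hence $\|\partial G_l/\partial\theta_l\|\overset{\Theta}{=}\sqrt{v_l^2+w_l^2}\,\|x_l\|$ and $\|\partial G_l/\partial x_l\|\overset{\Theta}{=}|v_l w_l|$. The LN estimate $\|\partial LN(y)/\partial y\|=\mathcal{O}(\sqrt{d}/\|y\|)$ cited earlier in the paper, applied to $y=\alpha x_l+G_l$ with $\|y\|\overset{\Theta}{=}\alpha\|x_l\|$ (the residual branch dominates under the $\beta$-scaled initialization), gives the per-block Jacobians
\begin{equation*}
\bigl\|\partial x_{l+1}/\partial\theta_l\bigr\|\overset{\Theta}{=}\frac{\sqrt{v_l^2+w_l^2}}{\alpha},\qquad \bigl\|\partial x_{l+1}/\partial x_l\bigr\|\overset{\Theta}{=}1,
\end{equation*}
the second estimate using that LN pins $\|x_l\|\overset{\Theta}{=}\sqrt{d}$ at every layer. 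Plugging into $\partial F/\partial\theta_i=\bigl(\prod_{l>i}\partial x_{l+1}/\partial x_l\bigr)\partial x_{i+1}/\partial\theta_i$, the forward product stays $\overset{\Theta}{=}1$, leaving the single parameter factor $\sqrt{v_i^2+w_i^2}/\alpha$; summing over $i$ recovers the theorem.

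The main obstacle is keeping the forward Jacobian product bounded uniformly in $N$: each $\partial x_{l+1}/\partial x_l$ actually takes the form $1+\mathcal{O}(v_l w_l/\alpha)$, so a naive Gronwall-style estimate would give an exponential $\exp(\sum_l v_l w_l/\alpha)$ that could depend badly on $N$. Controlling this cascade is exactly what the architecture-specific scaling of $\alpha$ and $\beta$ in~\cref{set:imple} is engineered to achieve, so the cleanest presentation is to state the present theorem as a layer-wise abstract inequality and defer the numerical choice of $\alpha,\beta$ to that section. A secondary subtlety is that the paper's $\overset{\Theta}{=}$ arithmetic has to be read carefully enough for the per-block estimates to combine into an actual Euclidean-norm inequality rather than merely a magnitude-order estimate; in particular the reduction from matrices to the scalars $v_l,w_l$ must preserve the sub-additive behaviour of $\|\partial G_l/\partial\theta_l\|$ that makes the $\sqrt{v_l^2+w_l^2}$ factor tight.
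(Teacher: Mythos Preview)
Your plan is correct and lands on the same per-layer estimates as the paper, but the mechanics differ in two small ways worth knowing. First, the paper does not interpolate along $\theta(t)$ and bound $\|\partial F/\partial\theta_i\|$ globally; instead it works recursively on the layer outputs, applying a first-order Taylor expansion to $\|x_{l+1}^*-x_{l+1}\|$ in terms of $\|x_l^*-x_l\|$ and $\|\theta_l^*-\theta_l\|$, and then unrolling. Second, rather than invoking the $\|\partial LN(y)/\partial y\|=\mathcal{O}(\sqrt{d}/\|y\|)$ bound, the paper reduces to $d=1$ and writes the normalized block explicitly as $x_{l+1}\overset{\Theta}{=}\frac{\alpha+v_lw_l}{\sqrt{\alpha^2+v_l^2w_l^2}}\,x_l$, then differentiates this scalar expression directly to read off $\partial f_l/\partial x\overset{\Theta}{=}\frac{\alpha+v_lw_l}{\sqrt{\alpha^2+v_l^2w_l^2}}$ and $\|\partial f_l/\partial\theta_l\|\overset{\Theta}{=}\frac{\alpha(\alpha-v_lw_l)}{(\alpha^2+v_l^2w_l^2)^{3/2}}\sqrt{v_l^2+w_l^2}$, which it then simplifies to $1$ and $\sqrt{v_l^2+w_l^2}/\alpha$ respectively under $v_lw_l\ll\alpha$. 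Your route via the cited LN gradient estimate is arguably cleaner and keeps general $d$, while the paper's explicit formula makes the $(\alpha,v,w)$ dependence transparent without an external lemma. On your flagged obstacle: the paper handles the forward-product issue exactly as you anticipate, by approximating each factor to $1$ and stating the final bound at the $\overset{\Theta}{=}/\approx$ level rather than as a strict analytic inequality, so your concern about promoting the magnitude calculus to a genuine $\le$ is legitimate but is not something the paper itself resolves.
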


Vanilla \postln{} can be regarded as a special case of \our{}, where $\alpha = 1$ and $v_l = w_l = 1$ at Xavier initialization~\citep{xavier}. Based on \cref{thm:encoder}, we have $||\Delta F|| = \mathcal{O}(\sum_{i=1}^{2N} || \theta_{i}^* - \theta_{i} ||)$ for vanilla \postln{}. It shows that the model tends to accumulate the update of each sub-layer, which leads to exploding magnitude of model's update and destabilizes the optimization at the early stage. This explains our findings in~\cref{instability}. 

Besides, \cref{thm:encoder} also explains why warm-ups and smaller initialization can stabilize the training of \postln{}. Warm-ups can reduce the magnitude of the model update by decreasing $|| \theta_i^* - \theta_i ||$, while smaller initialization lowers $\sqrt{v_i^2 + w_i^2}$.

Furthermore, we study the magnitude of \our{} with an $N$-layer encoder and an $M$-layer decoder. Let $F_{ed}(x, y, \theta_e, \theta_d)$ denotes the model, where $x, y$ is the input of encoder and decoder. $\theta_e$ follows the same definition as $\theta$ in \cref{thm:encoder}. $\theta_d = \{\theta_{d1}, \theta_{d2}, ..., \theta_{d,3M} \} $ stands for the parameters of self-attentions, cross-attentions, and FFNs. We use \{$\alpha_e$, $G_{el}$\} and \{$\alpha_d$, $G_{dl}$\} to distinguish the notations between the encoder and the decoder. The following theorem shows the expected magnitude of the encoder-decoder's model update $ || \Delta F_{ed} || =  || F_{ed}(x, y, \theta_e^*, \theta_d^*) - F_{ed}(x, y, \theta_e, \theta_d)||$.

\begin{theorem}
\label{thm: en2de}
Given an encoder-decoder \our{} $F_{ed}(x, y, \theta_e, \theta_d)$ with N encoder layers and M decoder layers, where each encoder sub-layer is normalized as $x_{l+1} = LN(\alpha_e x_l + G_{el}(x_l, \theta_{el}))$, and the decoder sub-layer is normalized as $x_{l+1} = LN(\alpha_d x_l + G_{dl}(x_l, \theta_{dl}))$, $ ||\Delta F_{ed}||$ satisfies:
\begin{align}
\label{eq: en2de}
    || \Delta F_{ed} || 
    &\le \sum_{j=1}^{M} \cfrac{v_{d,3j-1}w_{d, 3j-1}}{\alpha_d}\sum_{i=1}^{2N} \cfrac{\sqrt{v_{ei}^2 +  w_{ei}^2}}{\alpha_e} || \theta_{ei}^* - \theta_{ei}|| \notag \\
    &\quad + \sum_{j=1}^{3M} \cfrac{\sqrt{v_{dj}^2 + w_{dj}^2}}{\alpha_{d}} ||\theta_{dj}^* - \theta_{dj}||
\end{align}
\end{theorem}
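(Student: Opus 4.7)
The plan is a two-step hybrid argument: decompose $(\theta_e,\theta_d)\to(\theta_e^*,\theta_d^*)$ as $\theta_e\to\theta_e^*$ (with decoder fixed at $\theta_d$) followed by $\theta_d\to\theta_d^*$ (with encoder fixed at $\theta_e^*$), and bound $||\Delta F_{ed}||$ by the triangle inequality against the two corresponding partial differences. These two partial differences will produce, respectively, the first and second summands on the right-hand side of the stated inequality.

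For the second step (decoder-only change with encoder output frozen), the decoder is a stack of $3M$ DeepNorm-normalized sub-layers (self-attention, cross-attention, FFN, for $j=1,\ldots,M$) applied to fixed inputs $(y,x_{enc})$. This is structurally identical to the setting of Theorem~\ref{thm:encoder}, with the cross-attention handled exactly as self-attention is: Lemma~\ref{lem:softmax} reduces each attention sub-layer to scalar multiplication by $vw$, so the induction in $l$ tracking $||\Delta x_l||$ carries over verbatim and produces the sum $\sum_{j=1}^{3M} \frac{\sqrt{v_{dj}^2+w_{dj}^2}}{\alpha_d}||\theta_{dj}^*-\theta_{dj}||$.

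For the first step, I would first apply Theorem~\ref{thm:encoder} to the encoder alone to obtain $||\Delta y_{enc}|| \le \sum_{i=1}^{2N}\frac{\sqrt{v_{ei}^2+w_{ei}^2}}{\alpha_e}||\theta_{ei}^*-\theta_{ei}||$, and then propagate this perturbation through the decoder. The encoder output enters only at the cross-attention sub-layer of each decoder layer; since the queries of cross-attention come from the decoder stream and $K,V$ come from $y_{enc}$, Lemma~\ref{lem:softmax} gives $\mathrm{CrossAttn}(Q,y_{enc},y_{enc}) \overset{\Theta}{=} v_{d,3j-1}w_{d,3j-1}\,y_{enc}$, so the $j$-th cross-attention output shifts by $v_{d,3j-1}w_{d,3j-1}\,\Delta y_{enc}$. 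Each such shift passes through its residual-plus-LN block with a $1/\alpha_d$ attenuation (from the LN Jacobian bound $\mathcal{O}(\sqrt{d}/||s||)$ applied at the pre-LN norm $\Theta(\alpha_d\sqrt{d})$), and the subsequent DeepNorm sub-layers are approximately non-expansive on this small perturbation (the same inductive invariant used inside Theorem~\ref{thm:encoder}). Because the $M$ cross-attention contributions enter the decoder's residual stream additively, summing over $j$ gives the first term $\bigl(\sum_{j=1}^{M}\frac{v_{d,3j-1}w_{d,3j-1}}{\alpha_d}\bigr)\,||\Delta y_{enc}||$.

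The main obstacle is the combined propagation in the first sub-problem: a perturbation injected at cross-attention layer $j$ must travel through layers $j{+}1,\ldots,M$ without amplification, and the $M$ injections must compose additively rather than multiplicatively. I expect to need an auxiliary ``stability of propagation'' lemma --- that a perturbation $\Delta$ introduced as a sub-layer output in a DeepNorm stack of depth $L$ exits with magnitude at most $||\Delta||/\alpha$ times a $\prod_i(1+v_i w_i/\alpha)$ factor that is $\Theta(1)$ under DeepNet's choice of $\alpha,\beta$ --- essentially the key invariant already underlying Theorem~\ref{thm:encoder}, which should factor cleanly out of its proof and make the additive summation over $j$ straightforward.
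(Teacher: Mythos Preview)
Your proposal is correct in substance but organized differently from the paper's proof. The paper does \emph{not} split into a two-step hybrid; instead it runs a single layer-by-layer induction through the decoder with both parameter families changing at once. At each cross-attention sub-layer it writes the scalar model
\[
y_{l+1}\;\overset{\Theta}{=}\;\frac{\alpha_d\,y_l+v_{dl}w_{dl}\,x_e}{\sqrt{\alpha_d^2+v_{dl}^2w_{dl}^2}},
\]
takes a first-order Taylor expansion in $(y_l,x_e,\theta_{dl})$, and bounds the three resulting coefficients: $\partial_{y}f_{dl}\le 1$, $\partial_{x_e}f_{dl}\le v_{dl}w_{dl}/\alpha_d$, and $\|\partial_{\theta}f_{dl}\|\le \sqrt{v_{dl}^2+w_{dl}^2}/\alpha_d$. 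Unrolling the recursion $||y_{l+1}^*-y_{l+1}||\le ||y_l^*-y_l||+\frac{v_{dl}w_{dl}}{\alpha_d}||x_{2N+1}^*-x_{2N+1}||+\frac{\sqrt{v_{dl}^2+w_{dl}^2}}{\alpha_d}||\theta_{dl}^*-\theta_{dl}||$ (with the same recursion, minus the middle term, at self-attention and FFN sub-layers) directly yields both sums in the theorem, after plugging in the Theorem~\ref{thm:encoder} bound for $||x_{2N+1}^*-x_{2N+1}||$.

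Your hybrid decomposition buys a clean separation of the encoder- and decoder-parameter contributions, at the cost of the auxiliary ``stability of propagation'' lemma you flag. The paper's single-pass Taylor approach sidesteps that lemma entirely: the non-amplification you worry about is exactly the coefficient $\partial_y f_{dl}\le 1$ appearing in the same expansion, so the additive accumulation over the $M$ cross-attention injections is automatic from the recursion rather than requiring a separate argument. Functionally the two routes use the same three Jacobian bounds and the same reduction of attention to $vw$-scalar multiplication via Lemma~\ref{lem:softmax}; the paper's is just shorter.
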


The vanilla encoder-decoder model satisfies that all of \{$\alpha_e$, $\alpha_d$, $v_{ei}$, $w_{ei}$, $v_{di}$, $w_{di}$\} equal to $1$, so we have $||\Delta F_{ed}|| = \mathcal{O}(M\sum_{i=1}^{2N} || \theta_{ei}^* - \theta_{ei} || + \sum_{j=1}^{3M} || \theta_{dj}^* - \theta_{dj} ||)$. It indicates the similar accumulative effect which leads to fast growth of the magnitude regarding the model depth (see \cref{postln_model_update}). Furthermore, the cross-attention propagates the magnitude from the encoder to the decoder, which explains why the decoder is more unstable than the encoder~\citep{LiyuanLiu2020UnderstandingTD}.

\begin{figure*}[t]
\begin{center}
\includegraphics[width=0.4\columnwidth]{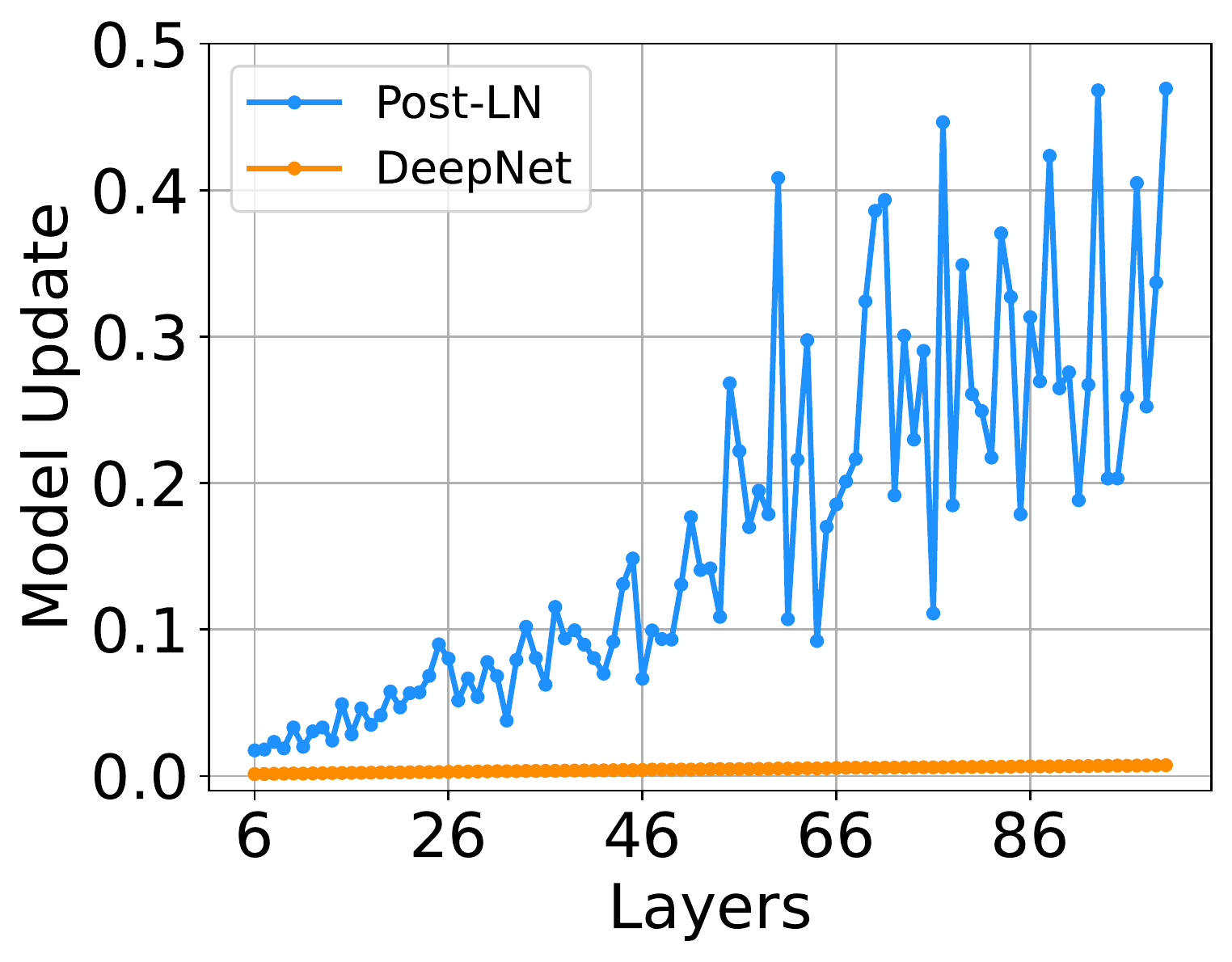}
\caption{Model updates of vanilla \postln{} and \our{} at the early stage of training. The visualization is conducted on 64-128-2 tiny Transformers with depth varying from 6L-6L to 100L-100L. It shows that \our{} has much smaller and more stable updates than \postln{}.}\label{postln_model_update}
\end{center}
\end{figure*}

\subsection{Derivation for \deepnorm{} and the Initialization}
\label{set:imple}

We show that the expected model updates for \our{} can be bounded by a constant with proper parameters $\alpha$ and $\beta$.
Our analysis is based on SGD update, and we empirically verify it works well for Adam optimizer~\citep{adam}. 
We provide the analysis on the encoder-decoder architecture, which can be naturally extended to encoder-only and decoder-only models in the same way.
Analogous to \citet{HongyiZhang2019FixupIR}, we set our goal for the model update as follows:

\begin{framed}
\textbf{GOAL:} $F_{ed}(x, y, \theta_e, \theta_d )$ is updated by $\Theta(\eta)$ per SGD step after initialization as $\eta \rightarrow 0$. That is $|| \Delta F_{ed} || = \Theta(\eta)$ where $\Delta F_{ed} \overset{\Delta}{=} F_{ed}(x, y, \theta_e - \eta \frac{\partial \mathcal{L}}{\partial \theta_e}, \theta_d - \eta \frac{\partial \mathcal{L}}{\partial \theta_d}) - F_{ed}(x, y, \theta_e, \theta_d )$.      
\end{framed}

For SGD optimizer, the update of each decoder layer $|| \theta_{di}^* - \theta_{di} ||$ equals to $\eta || \frac{\partial \mathcal{L}}{\partial \theta_{di}} ||$. \cite{RuibinXiong2020OnLN} proved that \postln{} decreases the magnitude of backpropagating error signal, so we have $ || \frac{\partial F}{\partial \theta_{dj}} || \le || \frac{\partial F}{\partial \theta_{d,3M}} ||$. With $|| \frac{\partial F}{\partial \theta_{d,3M}} || \overset{\Theta}{=} \frac{||\theta_{d,3M}||}{\alpha_d}$ and the assumption $||\frac{\partial \mathcal{L}}{\partial F} || = \mathcal{O}(1)$, the second term of \cref{eq: en2de} can be bounded as:

\begin{align}
    \sum_{j=1}^{3M} \cfrac{\sqrt{v_{dj}^2 + w_{dj}^2}}{\alpha_{d}} ||\theta_{dj}^* - \theta_{dj}|| 
    &\le \eta ||\cfrac{\partial \mathcal{L}}{\partial F} ||\cdot ||\cfrac{\partial \mathcal{F}}{\partial \theta_{d, 3M}}|| \sum_{j=1}^{3M} \cfrac{\sqrt{v_{dj}^2 + w_{dj}^2}}{\alpha_d} \notag \\
    &\overset{\Theta}{=} 3\eta M \cfrac{v_d^2 + w_d^2}{\alpha_d^2} \label{eq:bound-impl}
\end{align}

There are multiple schemes to bound \cref{eq:bound-impl} by $\Theta(\eta)$.
In order to balance the effect of residual connections and the initialization, we set $\alpha_d^2 = (3M)^{\frac{1}{2}}$, $v_d^2 + w^d_2 = (3M)^{\frac{1}{2}}$ and $v_d = w_d = \beta_d$ due to symmetry, that is $\alpha_d = (3M)^{\frac{1}{4}}$, $\beta_d = (12M)^{-\frac{1}{4}}$. Similarly, we use $v_e = w_e = \beta_e = 0.87(N^4 M)^{-\frac{1}{16}}, \alpha_e = 0.81(N^4 M)^{\frac{1}{16}}$ to bound the first term in \cref{eq: en2de}. Detailed derivation is shown in \cref{appendix_derivation_en2de}.

In comparison with \postln{}, we visualize the model updates for \our{} on IWSLT-14 De-En translation dataset at the early training stage.
\cref{postln_model_update} shows that the model update of \our{} is nearly constant, while the model update of \postln{} is exploding.

In summary, we apply our approach as follows:

\begin{tcolorbox}[enhanced,attach boxed title to top center={yshift=-3mm,yshifttext=-1mm}, title=\textbf{Encoder-decoder architecture}, colback=white, colframe=white!75!blue, coltitle=black, colbacktitle=white]
    \begin{enumerate}[leftmargin=*]
    \item Apply standard initialization (e.g., Xavier initialization) for each encoder and decoder layer.
    \item For encoder layers, scale the weights of feed-forward networks as well as the value projection and the output projection of attention layers by $0.87(N^4 M)^{-\frac{1}{16}}$, and set the weight of residual connections as $0.81(N^4 M)^{\frac{1}{16}}$.
    \item For decoder layers, scale the weights of feed-forward networks as well as the value projection and the output projection of attention layers by $(12M)^{-\frac{1}{4}}$, and set the weight of residual connections as $(3M)^{\frac{1}{4}}$.
    \end{enumerate}
\end{tcolorbox}

The derivation of encoder-only (such as BERT) and decoder-only (such as GPT) architectures can be conducted in the same way (see \cref{appendix_derivation}). We summarize the steps as follows:

\begin{tcolorbox}[enhanced,attach boxed title to top center={yshift=-3mm,yshifttext=-1mm}, title=\textbf{Encoder-only (or decoder-only) architecture}, colback=white, colframe=white!75!blue, coltitle=black, colbacktitle=white]
    \begin{enumerate}[leftmargin=*]
    \item Apply standard initialization (e.g., Xavier initialization) for each layer.
    \item For each layer, scale the weights of feed-forward networks as well as the value projection and the output projection of attention layers by $(8N)^{-\frac{1}{4}}$ (or $(8M)^{-\frac{1}{4}}$), and set the weight of residual connections as $(2N)^{\frac{1}{4}}$ (or $(2M)^{\frac{1}{4}}$).
    \end{enumerate}
\end{tcolorbox}

\section{Neural Machine Translation}
\label{nmt}

\begin{table*}[t]
\begin{center}
\begin{tabular}{l|c|cccc}
\toprule
\textbf{Models} & \textbf{LN}  & \textbf{6L-6L} & \textbf{18L-18L} & \textbf{50L-50L} &
 \textbf{100L-100L} \\
\midrule
Vanilla \postln{}~\citep{transformer} & Post & \textbf{28.1}	&  \multicolumn{3}{c}{diverged} \\
 DS-Init~\citep{BiaoZhang2019ImprovingDT} & Post & 27.9 & \multicolumn{3}{c}{diverged}  \\
 Admin~\citep{LiyuanLiu2020UnderstandingTD} & Post & 27.9 & \textbf{28.8} &  \multicolumn{2}{c}{diverged} \\
 \midrule
 ReZero~\citep{rezero2020} & No & 26.9 & \multicolumn{3}{c}{diverged} \\
 R-Fixup~\citep{HongyiZhang2019FixupIR} & No & 27.5 & 28.4 & 27.7 & diverged \\
 T-Fixup~\citep{XiaoShiHuang2020ImprovingTO} & No & 27.5 & 28.4 & 27.9	& diverged \\
 \midrule
 Vanilla \preln{}~\citep{transformer} & Pre & 27.0	& 28.1	& 28.0 & 27.4 \\
  DLCL~\citep{Wang2019DLCL} & Pre & 27.4 & 28.2 & diverged & 27.5
 \\
  NormFormer~\citep{Normformer2021} & Pre & 27.0 & 28.3 & 27.8 & diverged
 \\
 \midrule
 
 \bf \our{} (ours) & Deep & 27.8 & \textbf{28.8} & \textbf{29.0} & \textbf{28.9}
 \\
\bottomrule
\end{tabular}
\caption{BLEU scores on the WMT-17 En-De test set for different models with varying depth. $A$L-$B$L refers to $A$-layer encoder and $B$-layer decoder.}
\label{tab:wmt17}
\end{center}
\end{table*}

We verify the effectiveness of \our{} on the popular machine translation benchmarks, including IWSLT-14 German-English (De-En) dataset and WMT-17 English-German (En-De) dataset. We compare our method with multiple state-of-the-art deep Transformer models, including DLCL~\citep{Wang2019DLCL}, NormFormer~\citep{Normformer2021}, ReZero~\citep{rezero2020}, R-Fixup~\citep{HongyiZhang2019FixupIR}, T-Fixup~\citep{XiaoShiHuang2020ImprovingTO}, DS-init~\citep{BiaoZhang2019ImprovingDT}, and Admin~\citep{LiyuanLiu2020UnderstandingTD}. We reproduce the baselines with their open-source code, and set the hyper-parameters the same for a fair comparison.

We use BLEU as the evaluation metric for all experiments. \cref{tab:wmt17} reports the results of the baselines and \our{} on WMT-17 En-De translation dataset. According to their LNs, the baselines are grouped into three categories: \preln{}, \postln{}, and No-LN. All the compared models are base-size with different depths.

Compared with the models with \postln{}, \our{} is more stable, and can successfully scale to 100L-100L, reaching the 28.9 BLEU on the test set. In contrast, the baselines with \postln{} lead to unstable optimization when the depth goes to 50L-50L. Besides, \our{} achieves comparable performance with these baselines when the models are shallow.

In addition, we compare \our{} with the methods without LN. Both R-Fixup and T-Fixup introduce better initialization methods, which stabilize the training of No-LN Transformer with up to 50-50 layers. Yet, their performance is not as good as those with \postln{}. Besides, half-precision could destabilize the training of ReZero, leading to its divergence with 18-18 layers. This observation is also reported by \citet{LiyuanLiu2020UnderstandingTD}. Moreover, deeper models (50L-50L) do not outperform the shallow models (18L-18L). In comparison, \our{} achieves better translation accuracy than these methods, and scaling to deeper models brings no harm to the performance.

Compared with the \postln{} baselines, the models with \preln{} are more stable. Both vanilla \preln{} and DLCL can be scaled to 100L-100L, and 50L-50L NormFormer is also trained successfully. Nevertheless, \preln{} leads to a 0.5-1.0 BLEU drop compared with the converged \postln{} models. We presume this should be caused by the problem that gradients of \preln{} at earlier layers tend to be larger than gradients at later layers~\citep{Normformer2021}. We leave it as the future work. In contrast, \our{} alleviates the problem by using \postln{}, and outperforms all the \preln{} baselines.

\paragraph{Convergence with varying depth.}
We vary the depths of the models from 10L-10L to 100L-100L with an interval of 10 layers.All experiments are conducted with mixed precision training, except ReZero\footnote{According to our experiments, ReZero is unstable with half precision, even when the model is shallow.}. \cref{iwslt_convergence} shows the results on the IWSLT-14 dataset. We train the models for 8,000 steps because we find most divergence occurs at the beginning of optimization. Overall, \our{} is stable from shallow to deep. It converges fast, achieving over 30 BLEU in only 8,000 steps while most of the baselines do not. Moreover, the performance keeps improving as the model goes deeper. 

\paragraph{Large learning rate, batch size, and hidden dimension.}
We further scale \our{} to larger learning rate, batch size, and hidden dimension, respectively. For each experiment, we only change one hyperparameter with the others fixed. \cref{large_hyper} reports the loss curves on the WMT-17 validation set. It shows that \our{} can be trained without difficulty in all the largest settings. The loss of \our{} with 1024 hidden size increases after 10K steps because of overfitting. Besides, it indicates that \our{} can benefit from the larger settings, resulting in faster convergence and lower validation loss.

\begin{figure}[t]
\begin{center}
{
\includegraphics[width=0.31\columnwidth]{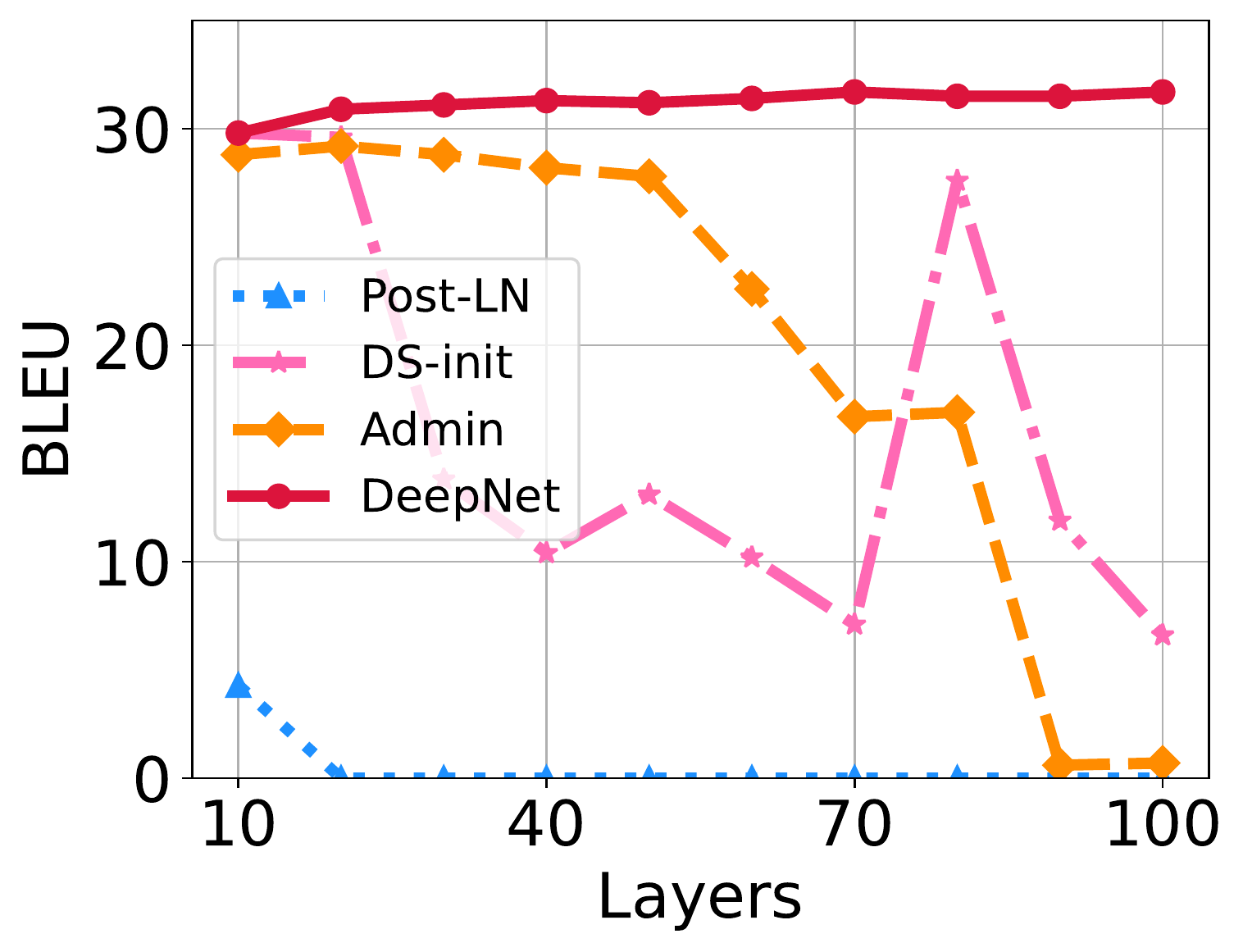}
\label{iwslt_post}
}
{
\includegraphics[width=0.31\columnwidth]{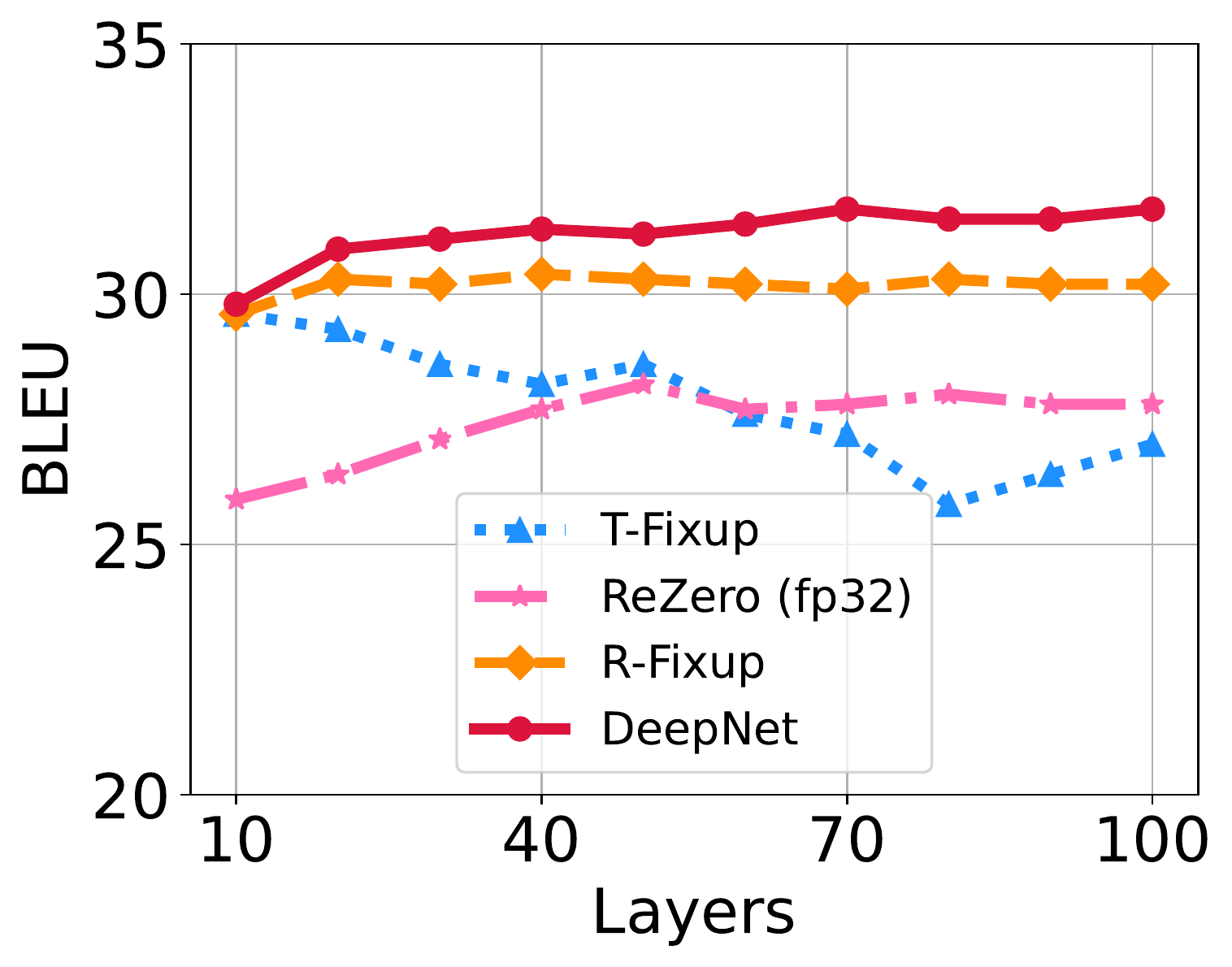}
\label{iwslt_no_norm}
}
{
\includegraphics[width=0.31\columnwidth]{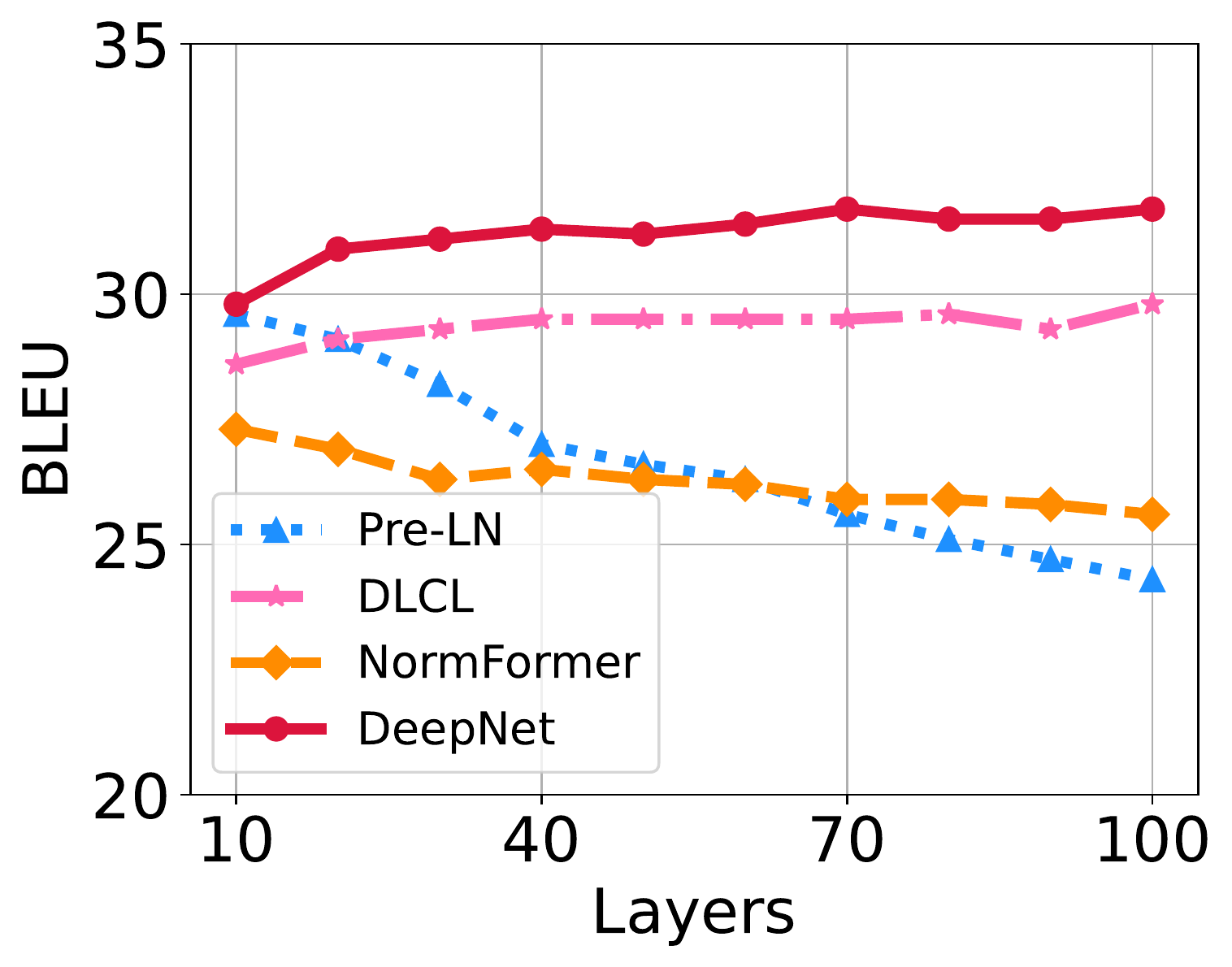}
\label{iwslt_pre}
}

\caption{BLEU scores on the IWSLT-14 De-En test set for different deep models with varing depth from 10L-10L to 100L-100L.}\label{iwslt_convergence}
\end{center}
\end{figure}

\begin{figure}[t]
\begin{center}
{
\includegraphics[width=0.31\columnwidth]{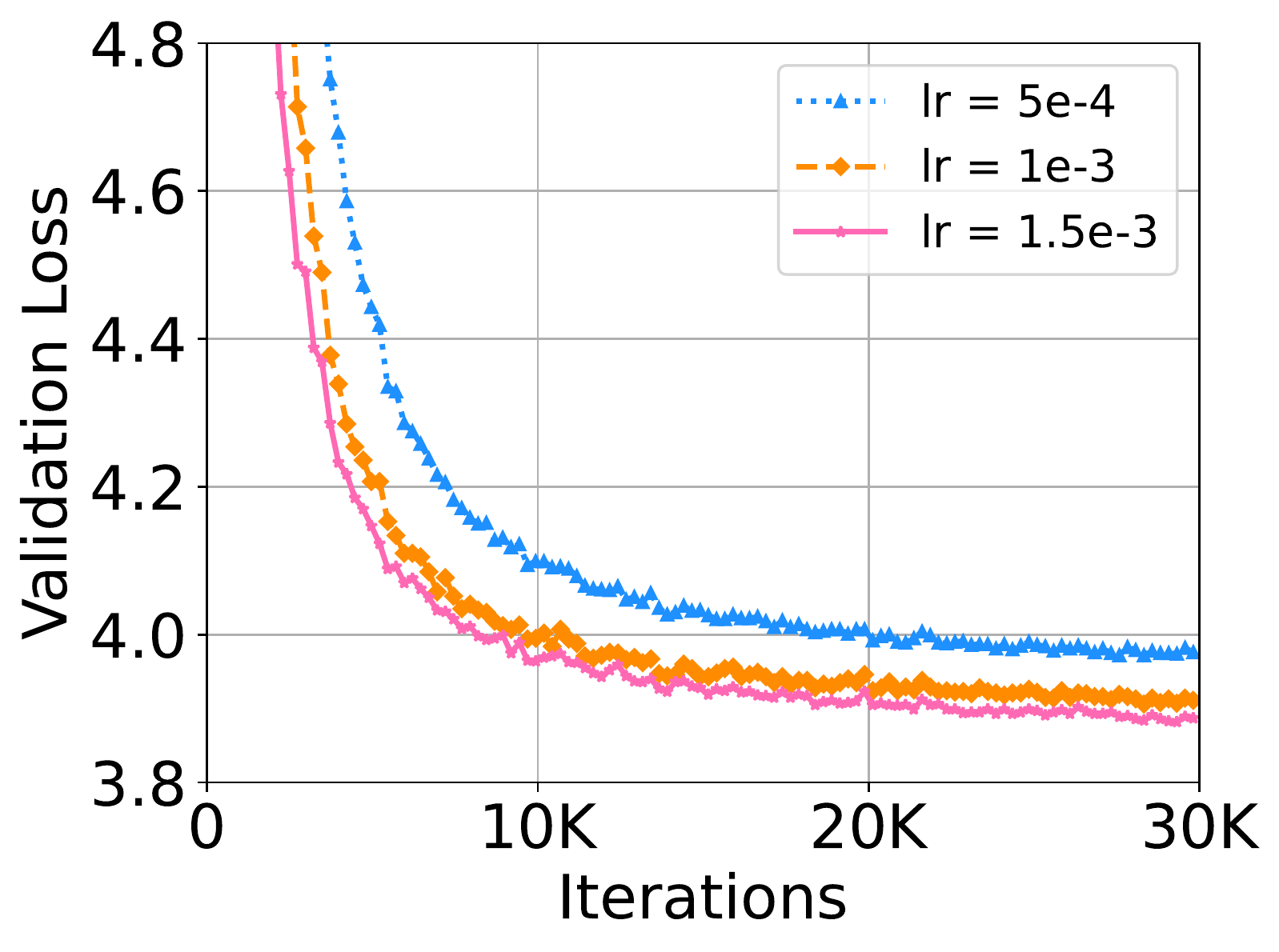}
\label{valid_loss_lr}
}
{
\includegraphics[width=0.31\columnwidth]{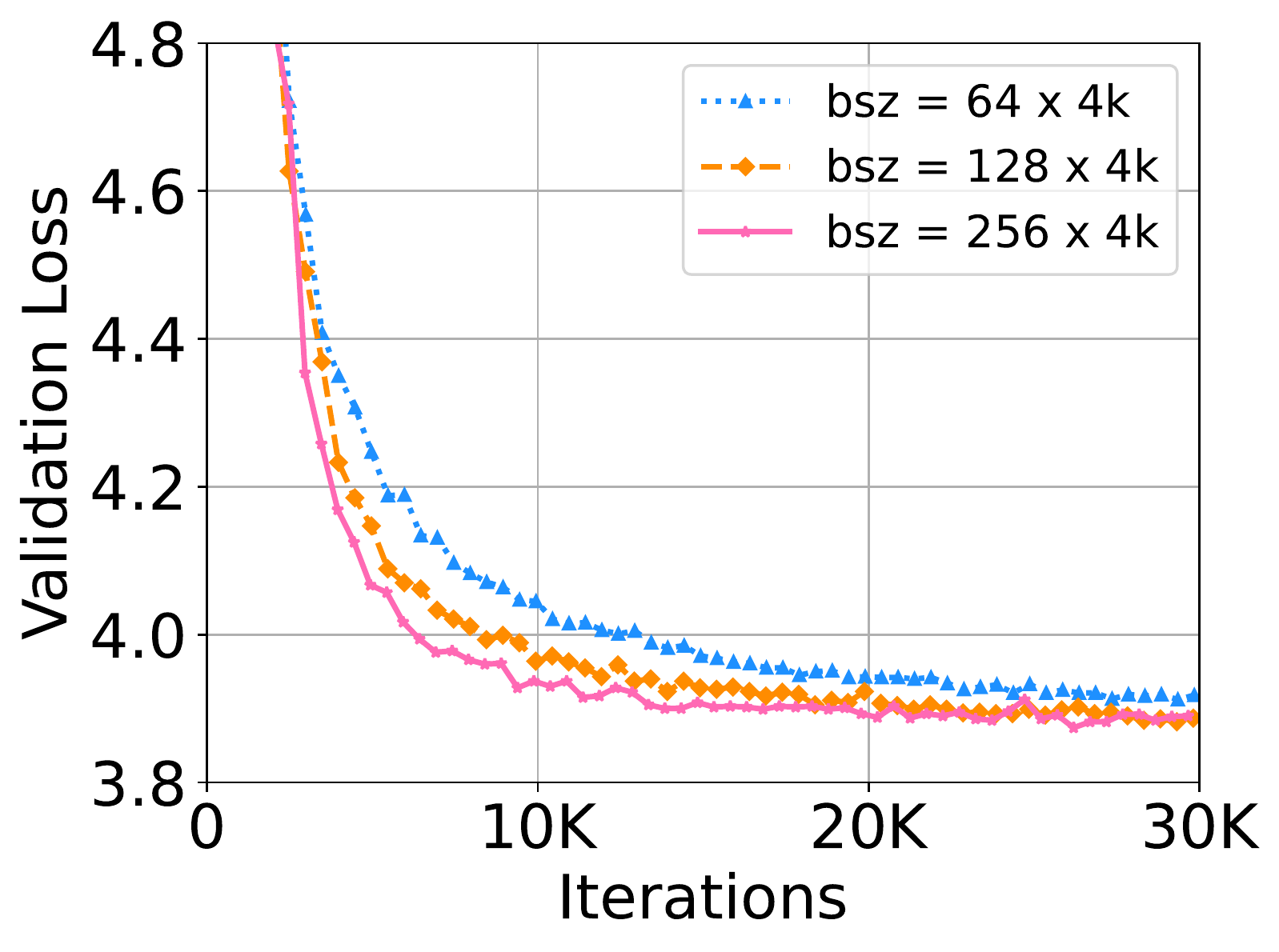}
\label{valid_loss_bsz}
}
{
\includegraphics[width=0.31\columnwidth]{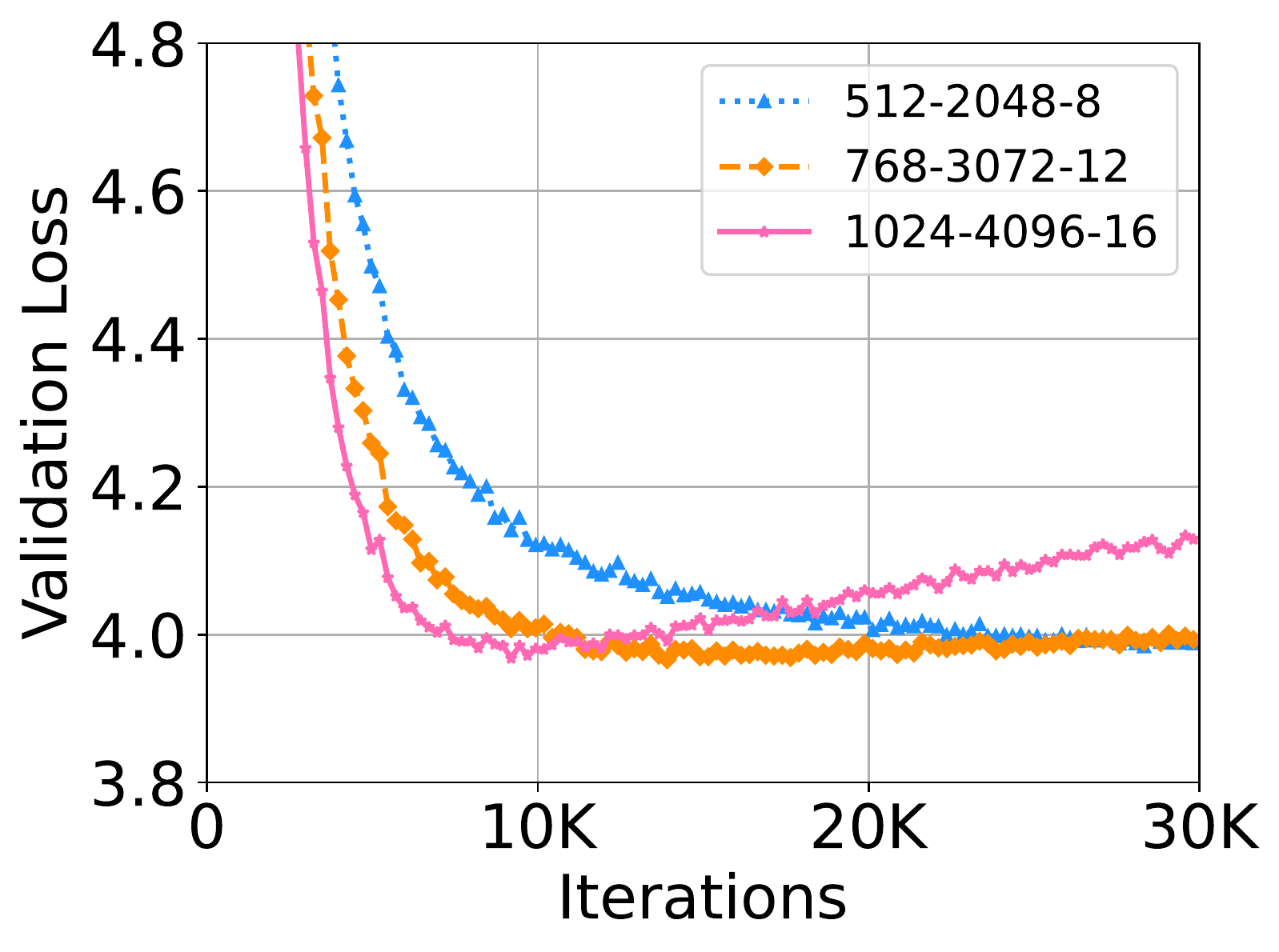}
\label{valid_loss_hidden}
}

\caption{WMT-17 En-De validation loss curves for 18L-18L \our{} with varing learning rate, batch size and hidden dimension.}
\label{large_hyper}
\end{center}
\end{figure}

\section{Massively Multilingual Neural Machine Translation}
\label{mnmt}

\begin{figure*}[t]
  \centering
  \includegraphics[width=0.4\columnwidth]{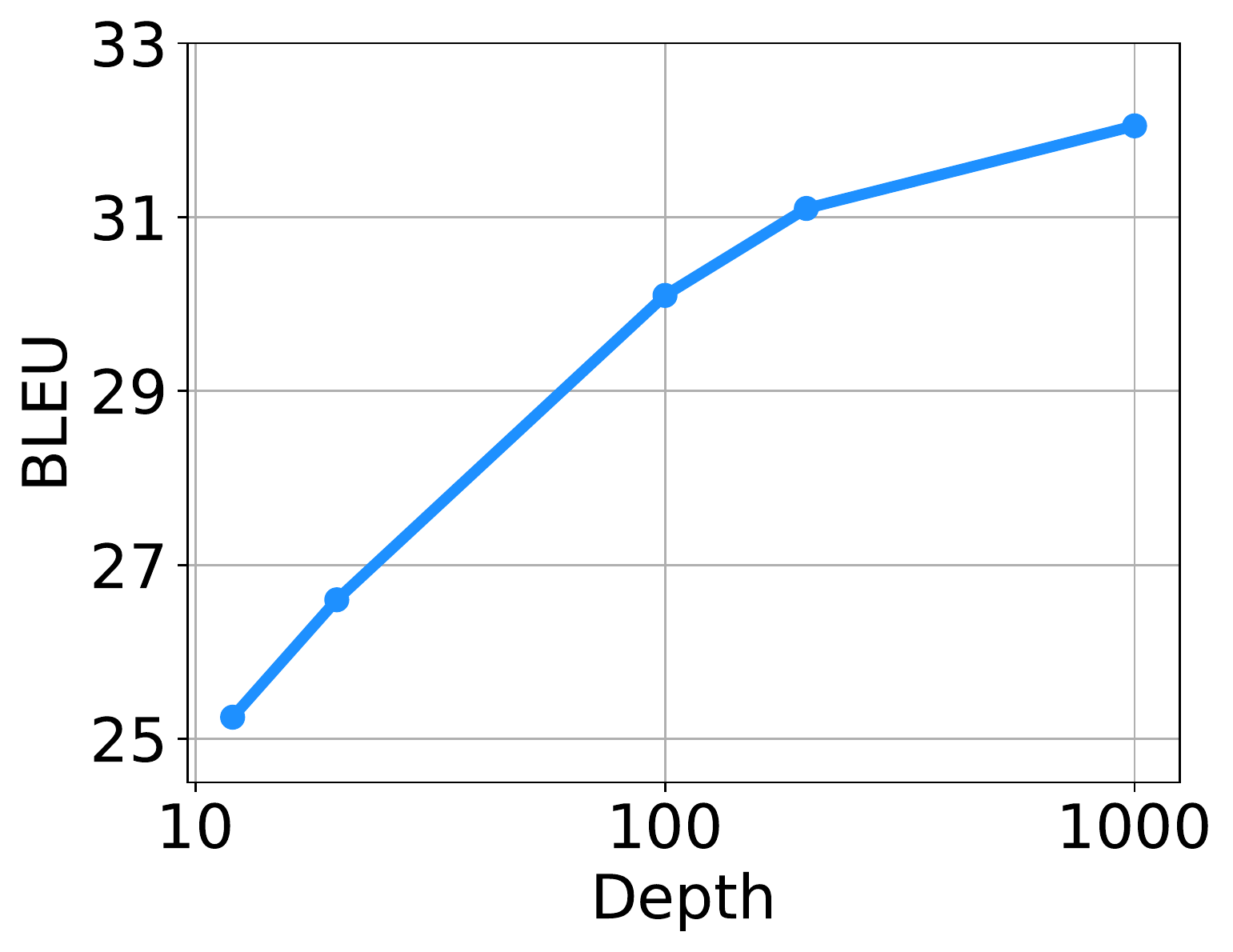}
  \caption{Average BLEU scores for \our{} with varying depth on the OPUS-100 En-X and X-En test sets.}\label{opus_curve}
\end{figure*}

\begin{table*}[t]
\begin{center}
\begin{tabular}{l|cc|cccc}
\toprule
\textbf{Models} & \textbf{\# Layers}  & \textbf{\# Params} & \textbf{X$\rightarrow$En} & \textbf{En$\rightarrow$X} & \textbf{Avg} \\
\midrule
\multirow{3}{*}{Baseline~\citep{opus100}} & 12 & 133M & 27.5 & 21.4 & 24.5 \\
 & 24 & 173M & 29.5 & 22.9 & 26.2 \\
 & 48 & 254M & 31.4 & 24.0 & 27.7 \\
\midrule
\multirow{2}{*}{\textbf{\our{} (ours)}} & 200 & 863M & 33.2 & 29.0 & 31.1 \\
 & 1000 & 3.8B & \bf 33.9 & \bf 30.2 & \bf 32.1 \\
\bottomrule
\end{tabular}
\caption{Average BLEU for \our{} and the baseline on the OPUS-100 test sets.}
\label{opus}
\end{center}
\end{table*}

\begin{table*}[t]
\begin{center}
\begin{tabular}{l|cc|ccccc}
\toprule
\textbf{Models} & \textbf{\# Layers}  & \textbf{\# Params} & \textbf{WMT} & \textbf{OPUS} & \textbf{TED} & \textbf{Flores} \\
\midrule
M2M-100~\citep{m2m100} & 48 & 12B & 31.9 & 18.4 & 18.7 & 13.6 \\
\textbf{\our{} (ours)} & 200 & 3.2B & \bf 33.9 & \bf 23.0 & \bf 20.1 & \bf 18.6 \\
\bottomrule
\end{tabular}
\caption{BLEU scores for \our{} and M2M-100 on various evaluation sets.}
\label{m2m}
\end{center}
\end{table*}

We conduct experiments on the large-scale multilingual machine translation, which is a good testbed for large models. We first use OPUS-100 corpus~\citep{opus100} to evaluate our model. OPUS-100 is an English-centric multilingual corpus covering 100 languages, which is randomly sampled from the OPUS collection. We scale \our{} up to 1,000 layers. The model has a 500-layer encoder, a 500-layer decoder, 512 hidden size, 8 attention head, and 2,048 dimensions of feed-forward layers. More details can be found in the Appendix.

\cref{opus} summarizes the results of \our{} and the baselines. It shows that increasing the depth can significantly improve the translation quality of NMT: the baseline of 48 layers achieves a gain of 3.2 points on average over the 12-layer model. \our{} can successfully scale up the depth to 1,000 layers, outperforming the baseline by an improvement of 4.4 BLEU. It is noted that \our{} is only trained for 4 epochs, and the performance can be further improved given more computation budgets.

\paragraph{Scaling law in terms of depth} 

We train \our{} of \{12, 20, 100, 200, 1000\} layers on the OPUS-100 dataset. \cref{opus_curve} illustrates the scaling curve. Compared with bilingual NMT, multilingual NMT benefits more from scaling the depth of the model because of its hunger in model capacity. We observe logarithmic growth of the BLEU score for multilingual NMT, and the scaling law can be written as:
\begin{equation*}
    L(d)=A\log(d)+B
\end{equation*}
where $d$ is the depth, and $A,B$ are the constants regarding the other hyper-parameters.

\paragraph{More data and language directions.}
To explore the limits of \our{} on multilingual NMT, we then scale up the training data by using CCMatrix~\citep{ccmatrix}. We also expand the data from CCAligned~\citep{ccaligned}, OPUS~\citep{opus100}, and Tatoeba\footnote{\url{https://tatoeba.org/en/}} to cover all languages of Flores101 evaluation sets. The final data consists of 102 languages, 1932 directions, and 12B sentence pairs. With the data, we train \our{} with a 100-layer encoder, 100-layer decoder, 1,024 hidden dimension, 16 heads, and 4,096 intermediate dimension of feed-forward layers. More details can be found in the Appendix.

We compare \our{} with the state-of-the-art multilingual NMT model M2M-100~\citep{m2m100}. M2M-100 has a 24-layer encoder, a 24-layer decoder, and 4,096 hidden size, resulting in up to 12B parameters. Compared with M2M-100, \our{} is deep and narrow with only 3.2B parameters. For a fair comparison, we generate the model with beam size 5 and length penalty 1.

Following M2M-100~\citep{m2m100}, we evaluate the models on several multilingual translation evaluation datasets, including WMT~\citep{wmt14,wmt17,wmt18,wmt19}, OPUS~\citep{opus100}, TED~\citep{ted-data}, and Flores~\citep{flores101}. The language pairs from the WMT dataset are English-centric. There are 10 languages including English, and most of them are high-resource. For the OPUS dataset, we select the non-English directions from the test set, which has 30 evaluation pairs. The TED evaluation set has 28 languages and 756 directions, and the data is from the spoken language domain. The Flores dataset has all translation pairs between 102 languages. We use a subset covering the languages supported by both M2M-100 and \our{}, resulting in 87 languages and 7,482 translation directions.

We report the results in~\cref{m2m}. For a fair comparison, we use the same evaluation methods as the baseline. The details can be found in the Appendix. It shows that \our{} has significantly better performance than M2M-100 on all evaluation datasets, indicating that deepening the model is a very promising direction to improve the quality of NMT models.

\section{Conclusion and Future Work}

We improve the stability of Transformer and successfully scale it to 1,000 layers. This is achieved by our \our{} with a novel normalization function called \deepnorm{}. It has theoretical justification to stabilize the optimization with a constant upper bound for model updates. Experimental results verify the effectiveness of our methods across various benchmarks.
We focus on machine translation as a test bed in the current experiments.
In the future, we will extend \our{} to support more diverse tasks, e.g., language model pre-training~\citep{unilm,unilmv2,chi-etal-2021-infoxlm,deltalm,chi2021xlme}, protein structure prediction~\citep{AlphaFold2021}, and BEiT vision pre-training~\citep{beit,vlmo}.

\paragraph{Acknowledgement}
We would like to acknowledge Saksham Singhal for the CCMatrix corpus.

\bibliographystyle{plainnat}
\bibliography{deepnet}

\newpage

\appendix

\section{Main Theorem Proof}
\subsection{Proof of \cref{lem:softmax}}

\begin{lemma}
     Given $\mathbf{X} = (\mathbf{x_1}, \mathbf{x_2}, ... \mathbf{x_n})^T \in \mathbf{R}^{n \times d}$, where $var(\mathbf{x_i}) = 1$, $mean(\mathbf{x_i}) = 0$ and $q_i \in \mathbf{R}$ for all $i \in [1, n]$, it satisfies that
     \begin{equation*}
         softmax(q_1, q_2, ..., q_n) \mathbf{X} \overset{\Theta}{=} \mathbf{x_i},
     \end{equation*}
     where $\overset{\Theta}{=}$ stands for equal bound of magnitude.
\end{lemma}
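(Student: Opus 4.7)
The plan is to exploit the defining property of softmax, namely that $(p_1,\dots,p_n):=\mathrm{softmax}(q_1,\dots,q_n)$ is a probability vector ($p_i\ge 0$, $\sum_i p_i=1$), so that $\mathbf{y}:=\sum_i p_i\mathbf{x_i}$ is a convex combination of the rows of $\mathbf{X}$. Since each row satisfies $\mathrm{mean}(\mathbf{x_i})=0$ and $\mathrm{var}(\mathbf{x_i})=1$, I will show that $\mathbf{y}$ has zero mean and bounded variance, hence equal order of magnitude as any single $\mathbf{x_i}$.

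First I would compute, for each coordinate $j\in[1,d]$, the mean
\[
\mathbb{E}[y_j]=\sum_{i=1}^n p_i\,\mathbb{E}[x_{ij}]=0,
\]
using that $\mathrm{mean}(\mathbf{x_i})=0$. Next I would bound the variance by treating the rows as (approximately) independent with unit variance, giving
\[
\mathrm{var}(y_j)=\sum_{i=1}^n p_i^2\,\mathrm{var}(x_{ij})=\sum_{i=1}^n p_i^2.
\]
The key inequality is then $\sum_i p_i^2\le\sum_i p_i=1$, which holds because $p_i\in[0,1]$ implies $p_i^2\le p_i$. This yields the upper bound $\mathrm{var}(\mathbf{y})\le 1=\mathrm{var}(\mathbf{x_i})$. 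For the lower bound, Cauchy--Schwarz gives $\sum_i p_i^2\ge \tfrac{1}{n}(\sum_i p_i)^2=\tfrac{1}{n}$, which together with the upper bound is enough to conclude $\sum_i p_i\mathbf{x_i}\overset{\Theta}{=}\mathbf{x_i}$ in the sense of ``equal bound of magnitude'' used in the paper (both mean and variance of the softmax-weighted sum are of the same order as those of a single row).

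The main obstacle is essentially notational rather than technical: the symbol $\overset{\Theta}{=}$ is only informally defined in the paper, so I would need to commit to an interpretation (matching zeroth and second moments up to multiplicative constants) and then state the independence/uncorrelatedness assumption on the rows of $\mathbf{X}$ explicitly, since without it the variance calculation $\mathrm{var}(\sum_i p_i x_{ij})=\sum_i p_i^2\mathrm{var}(x_{ij})$ does not hold. Given how the lemma is applied downstream (to justify dropping the softmax factor when reasoning about the magnitude of attention outputs), this interpretation is the natural one and no further ingredients are needed.
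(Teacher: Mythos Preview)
Your argument is correct in spirit but takes a different and more elaborate route than the paper. The paper interprets $\mathrm{var}(\mathbf{x_i})=1$ and $\mathrm{mean}(\mathbf{x_i})=0$ as \emph{empirical} statistics of the $d$ entries of each row, so that $\|\mathbf{x_i}\|$ is fixed (equal to $\sqrt{d}$, written as $d$ in the paper). With that reading, the proof is a one-liner: writing $s_i$ for the softmax weights, the triangle inequality and $\sum_i s_i=1$ give $\|\sum_i s_i\mathbf{x_i}\|\le\sum_i s_i\|\mathbf{x_i}\|=\|\mathbf{x_i}\|$, and that upper bound is all that is claimed. No coordinate-wise variance computation, no cross-row independence, and no lower bound are used.

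By contrast, you adopt a stochastic reading (each $x_{ij}$ a random variable) and compute $\mathrm{var}(y_j)=\sum_i p_i^2$, which forces you to posit uncorrelated rows---an assumption absent from the statement and, as you note, the main obstacle in your write-up. Your approach buys you an explicit lower bound $\sum_i p_i^2\ge 1/n$, but the paper neither states nor needs one; downstream the lemma is only invoked to drop the softmax factor as not enlarging the magnitude. So your proof is valid under the extra hypothesis, but you can obtain the paper's conclusion without it by switching to the deterministic norm argument above.
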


\begin{proof}
The weight $s_i$ of $\mathbf{x_i}$ to output is $s_i = \cfrac{e^{q_i}}{e^{\sum_{j=1}^n q_j}}$, $\sum_{i=1}^n s_i = 1$. 

\begin{equation}
    || \text{softmax}(q_1, q_2, ... ,q_n)\mathbf{X} || = 
    || \sum_{i=1}^n s_i \mathbf{x_i} || \le \sum_{i=1}^n s_i || \mathbf{x_i}||
\end{equation}

With $var(\mathbf{x_i}) = 1$, $mean(\mathbf{x_i}) = 0$, for all $i \in [1, n]$, we have $||\mathbf{x_i} || = d$. Therefore, $||\text{softmax}(q_1, q_2, ... ,q_n)\mathbf{X}|| \le ||\mathbf{x_i}|| = d$, which is equivalent to  softmax$(q_1, q_2, ..., q_n) \mathbf{X} \overset{\Theta}{=} \mathbf{x_i}$.
\end{proof}

\subsection{Proof of \cref{thm:encoder}}
\label{proof:thm1}

\begin{theorem}
Given an $N$-layer \our{} $F(x, \mathbf{\theta})$ ($\theta = \{\theta_1, \theta_2, ..., \theta_{2N} \} $), where $\theta_{2l-1}$ and ${\theta_{2l}}$ denote the parameters of self-attention and FFN in $l$-th layer, and each sub-layer is normalized with \deepnorm{}: $x_{l+1} = LN(\alpha x_l + G_l(x_l, \theta_l))$, $ ||\Delta F||$ satisfies: 
\begin{equation*}
    || \Delta F || \le \sum_{i=1}^{2N} \cfrac{\sqrt{v_{i}^2 + w_{i}^2}}{\alpha} || \theta_{i}^* - \theta_{i} ||
\end{equation*}
\end{theorem}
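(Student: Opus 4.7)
The plan is to bound $\|\Delta F\|$ by a telescoping sum over the $2N$ sub-layers and then control each summand by a per-block Jacobian estimate that combines the magnitude reduction from \cref{lem:softmax} with the standard Lipschitz bound on LN. First I would introduce the hybrid parameter vectors $\theta^{(i)} = (\theta_1^*, \ldots, \theta_{i-1}^*, \theta_i, \theta_{i+1}, \ldots, \theta_{2N})$, so that $\theta^{(1)} = \theta$ and $\theta^{(2N+1)} = \theta^*$. Writing $F$ as the composition of $2N$ residual sub-layers $f_l(\cdot, \theta_l) = LN(\alpha\,\cdot + G_l(\cdot, \theta_l))$, the triangle inequality yields
\begin{equation*}
\|\Delta F\| \le \sum_{i=1}^{2N} \|F(x, \theta^{(i+1)}) - F(x, \theta^{(i)})\|,
\end{equation*}
and each summand differs only in the single coordinate $\theta_i$, so a mean-value type estimate replaces it by $\|\partial F/\partial \theta_i\| \cdot \|\theta_i^* - \theta_i\|$ evaluated at an appropriate point on the segment joining $\theta^{(i)}$ and $\theta^{(i+1)}$.

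Next, for each $i$ I would decompose $\partial F / \partial \theta_i$ via the chain rule as the product of (i) the block Jacobian $\partial x_{l+1}/\partial \theta_l$ where $l = \lceil i/2 \rceil$, and (ii) the Jacobian through the subsequent sub-layers $\partial F / \partial x_{l+1}$. By \cref{lem:softmax} and the scalar reduction $G_l(x, \theta_l) \overset{\Theta}{=} v_l w_l x$ already justified in \cref{sec:method} for both attention and FFN, the inner derivative satisfies $\|\partial G_l / \partial \theta_l\|^2 \overset{\Theta}{=} v_l^2 + w_l^2$ (the gradient with respect to $v_l$ is $\Theta(w_l)$ and vice versa, upon treating $x_l$ as having unit magnitude after the preceding LN). Combining this with the LN estimate $\|\partial LN(y)/\partial y\| = \mathcal{O}(\sqrt{d}/\|y\|)$ of \citet{RuibinXiong2020OnLN} applied at $y_{l+1} = \alpha x_l + G_l(x_l, \theta_l)$, where $\|y_{l+1}\| = \Theta(\alpha \sqrt{d})$ because the previous LN places $x_l$ on the sphere of radius $\Theta(\sqrt{d})$ and $\alpha$ dominates $G_l$ at the relevant scale, gives
\begin{equation*}
\Bigl\|\frac{\partial x_{l+1}}{\partial \theta_l}\Bigr\| = \mathcal{O}\!\left(\frac{\sqrt{v_l^2 + w_l^2}}{\alpha}\right).
\end{equation*}
The outer Jacobian $\|\partial F / \partial x_{l+1}\| = \mathcal{O}(1)$ because every sub-layer above $l$ is terminated by an LN and is therefore magnitude-preserving. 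Multiplying these factors and summing over $i$ yields the claimed bound.

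The main obstacle is to justify the magnitude controls $\|x_l\| = \Theta(\sqrt{d})$ and $\|y_{l+1}\| = \Theta(\alpha \sqrt{d})$ uniformly along the hybrid path, rather than only at the initial parameter $\theta$. This uses the fact that the output of LN always lies on the sphere of radius $\sqrt{d}$, together with the smallness of the perturbations $G_l$ implied by $\beta$-scaled initialization, so that $\alpha$ dictates the magnitude of $y_{l+1}$. A secondary subtlety is that the reduction in \cref{lem:softmax} and the scalar-weight substitution are asymptotic equalities of magnitude ($\overset{\Theta}{=}$) rather than exact identities, so the conclusion should be read as a bound on magnitudes up to constants absorbed in the $\mathcal{O}$-notation, consistent with the conventions used elsewhere in the paper.
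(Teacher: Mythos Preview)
Your approach is correct and lands on the same bound, but it is organized differently from the paper's own argument. You change the parameters one coordinate at a time along a hybrid path $\theta^{(1)} \to \cdots \to \theta^{(2N+1)}$ and then push each single-parameter perturbation forward through the remaining layers via the chain rule, invoking the LN gradient estimate $\|\partial LN(y)/\partial y\| = \mathcal{O}(\sqrt d / \|y\|)$ together with $\|x_l\| = \Theta(\sqrt d)$ to get the $\sqrt{v_l^2+w_l^2}/\alpha$ factor. The paper instead works top-down: it reduces to hidden dimension $d=1$, writes $f_l(x)\overset{\Theta}{=}\tfrac{\alpha+v_lw_l}{\sqrt{\alpha^2+v_l^2w_l^2}}\,x$, computes $\partial f_l/\partial x$ and $\partial f_l/\partial \theta_l$ explicitly, and Taylor-expands $\|x_{l+1}^*-x_{l+1}\|$ in \emph{both} arguments at once to obtain the one-step recursion
\[
\|x_{l+1}^*-x_{l+1}\|\;\lesssim\;\|x_l^*-x_l\|+\frac{\sqrt{v_l^2+w_l^2}}{\alpha}\,\|\theta_l^*-\theta_l\|,
\]
which unrolls directly to the sum. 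Both routes hinge on the same two facts---$\partial f_l/\partial x \approx 1$ (your ``magnitude-preserving'' claim for the outer Jacobian is exactly the paper's approximation $\tfrac{\alpha+v_lw_l}{\sqrt{\alpha^2+v_l^2w_l^2}}\approx 1$) and $\|\partial f_l/\partial\theta_l\|\approx \sqrt{v_l^2+w_l^2}/\alpha$---so the level of rigor is the same. The paper's recursion is a little leaner: it avoids the hybrid path and the need to argue uniformity along it, and the $d=1$ reduction lets all Jacobians be read off directly rather than through the general LN gradient bound. Your version, on the other hand, makes the role of the forward Jacobian $\|\partial F/\partial x_{l+1}\|=\mathcal O(1)$ explicit, which is exactly the factor one has to control again in \cref{thm: en2de}.
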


\begin{proof}
Our aim is to study the magnitude of model updates. Following \cite{HongyiZhang2019FixupIR}, we make the following assumptions to simplify the derivations:

\begin{enumerate}
    \item Hidden dimension $d$ equals to $1$.
    \item $var(x + G_l(x)) \overset{\Theta}{=} var(x) + var(G_l(x))$
    \item All relevant weights $v, w$ are positive with magnitude less than 1 and $\alpha, \beta$ for \deepnorm{} are positive with magnitude greater than 1.
\end{enumerate}

Given Assumption 1, if $G(x)$ is feed-forward network with $\theta = \{v, w\}$, then $G(x) \overset{\Theta}{=} v w x$. According to \cref{lem:softmax}, the query and key projections do not change the bound of the attention output's magnitude. Therefore, if $G(x)$ is self-attention with $\theta = \{ q, k, v, w\}$, then $G(x) \overset{\Theta}{=} v w x$. Especially, if Xavier initialization is used for the projection, then the output can preserve the input variance, which is equivalent to $v = w = 1$. 
With Assumption 2, we have:

\begin{equation}
\label{eq:ffn-}
    x_{l+1} = f_l(x_l, \theta_l) = 
    \cfrac{\alpha x + G_l(x)}{\sqrt{Var(\alpha x + G_l(x))}} 
    \overset{\Theta}{=}
    \cfrac{\alpha + v_l w_l}{\sqrt{\alpha^2 + v_l^2 w_l^2}}x
\end{equation}

With \cref{eq:ffn-}, the magnitude of $\frac{\partial f_l}{\partial x}$ and $\frac{\partial f_l}{\partial \theta_l}$ is bounded by: 
\begin{align}
    \cfrac{\partial f_l}{\partial x} &\overset{\Theta}{=} \cfrac{\alpha + v_l w_l}{\sqrt{\alpha^2 + v_l^2 w_l^2}} \notag \\
    \cfrac{\partial f_l}{\partial \theta_l} &\overset{\Theta}{=} (\cfrac{\partial f}{\partial v_l}, \cfrac{\partial f}{\partial w_l}) \overset{\Theta}{=} \cfrac{\alpha x_l(\alpha - v_l w_l)}{(\alpha^2 + v_l^2 w_l^2)^\frac{3}{2}}\,(w_l, v_l) 
\end{align}

Besides, the model update $||\Delta F||$ satisfies:

\begin{equation}
\label{eq:def}
    ||\Delta F|| = ||F(x, \theta^*) - F(x, \theta)|| = || x_{{2N}+1}^* - x_{{2N}+1} ||  = || f(x_{2N}^*, \theta_{2N}^*) - f(x_{2N}, \theta_{2N}) ||  \\
\end{equation}

Using Taylor expansion for \cref{eq:def}, we get:
\begin{align}
    ||\Delta F|| &= || x_{{2N}+1}^* - x_{{2N}+1} || \\
    &\approx ||\cfrac{\partial f}{\partial x}(x_{2N}, \theta_{2N}) (x_{2N}^* - x_{2N}) + \cfrac{\partial f}{\partial \theta}(x_{2N}, \theta_{2N}) (\theta_{2N}^* - \theta_{2N})^T||  \notag \\
    &\le ||\cfrac{\partial f}{\partial x}(x_{2N}, \theta_{2N})|| \cdot || x_{2N}^* - x_{2N} || + ||\cfrac{\partial f}{\partial \theta}(x_{2N}, \theta_{2N})|| \cdot ||\theta_{2N}^* - \theta_{2N} || \notag \\
    &=\cfrac{\alpha + v_{2N} w_{2N}}{\sqrt{\alpha^2 + v_{2N}^2 w_{2N}^2}}|| x_{2N}^* - x_{2N} || + \cfrac{\alpha(\alpha - v_{2N} w_{2N})}{(\alpha^2 + v_{2N}^2 w_{2N}^2)^\frac{3}{2}} \sqrt{v_{2N}^2 + w_{2N}^2} ||\theta_{2N}^* - \theta_{2N} || \notag \\
    &\approx ||x_{2N}^* - x_{2N} || + \cfrac{\sqrt{v_{2N}^2 + w_{2N}^2}}{\alpha} || \theta_{2N}^* - \theta_{2N} ||
\end{align}

Then, we have:
\begin{equation}
    || x_{2N+1}^* - x_{2N+1} || \le \sum_{i=1}^{2N} \cfrac{\sqrt{v_{i}^2 + w_{i}^2}}{\alpha} || \theta_{i}^* - \theta_{i} ||
\end{equation}
\end{proof}

For vanilla \postln{} with standard initialization, $\alpha = v_i = w_i = 1$, so $||\Delta F|| = \mathcal{O}(\sum_{i=1}^{2N} ||\theta_i^* - \theta_i ||)$.

\textbf{Proof of \cref{thm: en2de}}

\begin{theorem}
Given an encoder-decoder \our{} $F_{ed}(x, y, \theta_e, \theta_d)$ with N encoder layers and M decoder layers, where each encoder sub-layer is normalized as $x_{l+1} = LN(\alpha_e x_l + G_{el}(x_l, \theta_{el}))$, and the decoder sub-layer is normalized as $x_{l+1} = LN(\alpha_d x_l + G_{dl}(x_l, \theta_{dl}))$, $ ||\Delta F_{ed}||$ satisfies:
\begin{align}
\label{appendix eq: en2de}
    || \Delta F_{ed} || 
    &\le \sum_{j=1}^{M} \cfrac{v_{d,3j-1}w_{d, 3j-1}}{\alpha_d}\sum_{i=1}^{2N} \cfrac{\sqrt{v_{ei}^2 +  w_{ei}^2}}{\alpha_e} || \theta_{ei}^* - \theta_{ei}|| \notag \\
    &\quad + \sum_{j=1}^{3M} \cfrac{\sqrt{v_{dj}^2 + w_{dj}^2}}{\alpha_d} ||\theta_{dj}^* - \theta_{dj}||
\end{align}
\end{theorem}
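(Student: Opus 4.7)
The plan is to extend the inductive Taylor-expansion argument of \cref{thm:encoder} to the encoder--decoder setting. The key new structural feature is that each decoder cross-attention sub-layer (at position $3j-1$) takes two distinct inputs: the running decoder representation $y_{3j-1}$ and the encoder output $x_{\text{enc}} := x_{N+1}$. Thus when unrolling $\Delta F_{ed}$ via the chain rule, every perturbation of $\theta_e$ appears \emph{indirectly} through $\Delta x_{\text{enc}}$ propagating into each of the $M$ cross-attentions, while perturbations of $\theta_d$ appear \emph{directly} inside the decoder stack. I would handle these two contributions separately and combine them by the triangle inequality.

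First, I would note that perturbing only the encoder parameters (with $\theta_d$ fixed) affects the decoder input only through $x_{\text{enc}}$, and by \cref{thm:encoder} applied to the $N$-layer encoder,
\begin{equation*}
\| x_{\text{enc}}^* - x_{\text{enc}} \| \le \sum_{i=1}^{2N} \cfrac{\sqrt{v_{ei}^2 + w_{ei}^2}}{\alpha_e} \,\|\theta_{ei}^* - \theta_{ei}\|.
\end{equation*}
Then I would track how this change propagates through the decoder. For each cross-attention sub-layer $G_{d,3j-1}(y,\theta,x_{\text{enc}})$, both $K$ and $V$ equal $x_{\text{enc}}$; by \cref{lem:softmax} its output has magnitude $\overset{\Theta}{=} v_{d,3j-1} w_{d,3j-1}\, x_{\text{enc}}$, so $\partial G_{d,3j-1}/\partial x_{\text{enc}} \overset{\Theta}{=} v_{d,3j-1} w_{d,3j-1}$. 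Composing with the LN-residual step exactly as in equation (5) of the proof of \cref{thm:encoder} yields a factor $\frac{v_{d,3j-1} w_{d,3j-1}}{\alpha_d}$ contributing a perturbation at the output of that sub-layer, while all subsequent decoder sub-layers propagate it with partial derivatives of magnitude $\approx 1$. Summing over all $M$ cross-attentions gives the first term of the claimed bound.

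Second, for perturbations of the decoder parameters I would fix $\theta_e = \theta_e^*$ and run the inductive argument of \cref{thm:encoder} verbatim along the $3M$-sub-layer decoder stack; since the decoder LN--residual structure is identical, this immediately yields the second term $\sum_{j=1}^{3M} \tfrac{\sqrt{v_{dj}^2 + w_{dj}^2}}{\alpha_d} \|\theta_{dj}^* - \theta_{dj}\|$. Adding the two pieces produces the stated inequality.

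The main obstacle I anticipate is the bookkeeping around cross-attention. One has to correctly identify that the relevant partial derivative with respect to the encoder output is $v_{d,3j-1} w_{d,3j-1}$ rather than $\sqrt{v^2+w^2}$, precisely because $x_{\text{enc}}$ enters the attention output linearly through $V W^V W^O$ while $K W^K$ only modulates the softmax weights (which by \cref{lem:softmax} do not change the magnitude bound). One also has to check that after the LN--residual step the downstream decoder sub-layers contribute telescoping factors of $\approx 1$, so that each cross-attention's contribution is independent of its position $j$ within the stack, which is what makes the prefactor in front of the encoder sum a plain sum over $j$ and not a product. Beyond this, the derivation reduces to re-applying the single-layer Taylor bound already established for \cref{thm:encoder}.
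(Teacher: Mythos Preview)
Your proposal is correct and follows essentially the same approach as the paper: bound $\|x_{\text{enc}}^*-x_{\text{enc}}\|$ via \cref{thm:encoder}, identify that the cross-attention sub-layer has $\partial f_{dl}/\partial x_e \overset{\Theta}{=} v_{dl}w_{dl}/\alpha_d$ (not $\sqrt{v^2+w^2}/\alpha_d$), and let the remaining decoder sub-layers propagate with factors $\approx 1$ so the contributions simply sum over $j$. The only cosmetic difference is that the paper does a single layer-by-layer Taylor recursion at each cross-attention (picking up the $y$, $x_e$, and $\theta_{dl}$ partials simultaneously), whereas you first split $\Delta F_{ed}$ by the triangle inequality into an encoder-only and a decoder-only perturbation; both organizations yield the same two terms. (Minor indexing: the encoder output is $x_{2N+1}$, not $x_{N+1}$, since there are $2N$ encoder sub-layers.)
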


\begin{proof}
The derivation of self-attention and FFN layers is given in \cref{proof:thm1}. For the cross-attention layers, we have:

\begin{equation}
\label{eq:cross-}
    y_{l+1} = f_{dl}(y_{l}, x_e, \theta_{dl}) = \cfrac{\alpha_d y_{l} + G_l(x_e, y_l)}{\sqrt{Var(\alpha_d y_{l} + G_{dl}(x_e, y_l))}} \overset{\Theta}{=} \cfrac{\alpha_d y_l + v_l w_l x_e}{\sqrt{\alpha_d^2 + v_l^2w_l^2}}
\end{equation}
With \cref{eq:cross-}, we have the bound of the derivative of $f_{dl}$:

\begin{align*}
    \cfrac{\partial f_{dl}}{\partial y} &\overset{\Theta}{=} \cfrac{\alpha_d}{\sqrt{\alpha_d^2 + v_l^2 w_l^2}},\quad 
    \cfrac{\partial f_{dl}}{\partial x_e} \overset{\Theta}{=} \cfrac{v_l w_l}{\sqrt{\alpha_d^2 + v_l^2 w_l^2}} \\ \\
    \cfrac{\partial f_{dl}}{\partial \theta_{dl}} &\overset{\Theta}{=} (\cfrac{\partial f_{dl}}{\partial v_{dl}}, \cfrac{\partial f_{dl}}{\partial w_{dl}}) \overset{\Theta}{=} \cfrac{\alpha_d x_e(\alpha_d - v_{dl} w_{dl})}{(\alpha_d^2 + v_{dl}^2 w_{dl}^2)^\frac{3}{2}}\,(w_{dl}, v_{dl})
\end{align*}

By means of Taylor expansion, we estimate the update of $l$-th cross-attention layer $||y_{l+1}^* - y_{l+1} ||$ as:

\begin{align}
    ||y_{l+1}^* - y_{l+1} || 
    &= ||f_{dl}^*(y_{l}^*, x_{2N+1}^*, \theta_{dl}^*) - f_{dl}(y_{l}, x_{2N+1}, \theta_{dl}) || \notag \\
    &\approx \cfrac{\alpha_d}{\sqrt{\alpha_d^2 + v_{dl}^2 w_{dl}^2}} || y_{l}^* - y_{l} || + \cfrac{v_{dl} w_{dl}}{\sqrt{\alpha_d^2 + v_{dl}^2 w_{dl}^2}} ||x_{2N+1}^* - x_{2N+1}|| \notag \\
    &\quad+ \cfrac{\alpha_d(\alpha_d - v_{dl} w_{dl})}{(\alpha_d^2 + v_{dl}^2 w_{dl}^2)^\frac{3}{2}}\,\sqrt{v_{dl}^2 + w_{dl}^2}||\theta_{dl}^* - \theta_{dl}|| \notag \\
    &\le|| y_{l}^* - y_{l} || + \cfrac{v_{dl} w_{dl}}{\alpha_d}||x_{2N+1}^* - x_{2N+1}|| + \cfrac{\sqrt{v_{dl}^2 + w_{dl}^2}}{\alpha_d} ||\theta_{dl}^* - \theta_{dl}||
\end{align}

According to \cref{thm:encoder}, we have $||x_{2N+1}^* - x_{2N+1}|| = \mathcal{O}(\sum_{i=1}^{2N} \frac{\sqrt{v_{ei}^2 + w_{ei}^2}}{\alpha_e} || \theta_{ei}^* - \theta_{ei}||)$. Therefore, the magnitude of $|| \Delta F_{ed} ||$ satisfies:

\begin{equation}
    || \Delta F_{ed} || \le \sum_{j=1}^{M} \cfrac{v_{d, 3j-1} w_{d, 3j-1}}{\alpha_d}\sum_{i=1}^{2N} \cfrac{\sqrt{v_{ei}^2 + w_{ei}^2}}{\alpha_e} || \theta_{ei}^* - \theta_{ei}|| + \sum_{j=1}^{3M} \cfrac{\sqrt{v_{dj}^2 + w_{dj}^2}}{\alpha_d} ||\theta_{dj}^* - \theta_{dj}||
    \label{ap:thm2}
\end{equation}
\end{proof}

As a special case, the corresponding parameters in~\cref{ap:thm2} for vanilla \postln{} with standard initialization are $1$, so its model update $||\Delta F_{ed} || = \mathcal{O}(M\sum_{i=1}^{2N} || \theta_{ei}^* - \theta_{ei} || + \sum_{j=1}^{3M}|| \theta_{dj}^* - \theta_{dj} ||)$.

\section{Derivation for Encoder-Decoder Architecture}
\label{appendix_derivation_en2de}
Here, we give the derivation of \our{} for the encoder-decoder architecture with an $N$-layer encoder and an $M$-layer decoder.
As in \cref{set:imple}, we have $v_d = w_d = (12M)^{-\frac{1}{4}}$, $\alpha_d = (3M)^{\frac{1}{4}}$ to bound the second term of \cref{ap:thm2} to $\Theta(\eta)$. For the first term, we set $v_{ei} = v_e, w_{ei} = w_e$, so that it goes to:
\begin{align}
    \sum_{j=1}^{M} \cfrac{v_{d, 3j-1} w_{d, 3j-1}}{\alpha_d}\sum_{i=1}^{2N} \cfrac{\sqrt{v_{ei}^2 + w_{ei}^2}}{\alpha_e} || \theta_{ei}^* - \theta_{ei}|| 
    &= M \cfrac{(12M)^{-\frac{1}{2}}}{(3M)^\frac{1}{4}} \sum_{i=1}^{2N} \cfrac{\sqrt{v_{ei}^2 + w_{ei}^2}}{\alpha_e} || \theta_{ei}^* - \theta_{ei}|| \\
    &\overset{\Theta}{=}\eta\left( \cfrac{N^4 M}{27} \right)^{\frac{1}{4}} \cfrac{v_e^2 + w_e^2}{\alpha_e^2}
\end{align}

In this work, we use $\alpha_e^2 = (N^4 M / 27)^{\frac{1}{8}}$, $v_e^2 + w_e^2 = (N^4 M / 27)^{- \frac{1}{8}}$ and $v_e = w_e = \beta_e$ that is $\alpha_e = 0.81(N^4 M)^{\frac{1}{16}}$, $\beta_e = 0.87(N^4 M)^{-\frac{1}{16}}$ to satisfy the condition.

\section{Derivation for Encoder-only (Decoder-only) Architecture}
\label{appendix_derivation}
For an $N$-layer \our{}, starting from \cref{thm:encoder} we have,
\begin{align}
    || x_{2N+1}^* - x_{2N+1} || 
    \le \sum_{i=1}^{2N} \cfrac{\sqrt{v_{i}^2 + w_{i}^2}}{\alpha} || \theta_{i}^* - \theta_{i} || \notag \\
    \le \eta \sum_{i=1}^{2N} \cfrac{\sqrt{v_{i}^2 + w_{i}^2}}{\alpha} || \cfrac{\partial \mathcal{L}}{\partial F}||\cdot||\cfrac{\partial F}{\partial \theta_i} ||
\end{align}
By assumption $||\frac{\partial \mathcal{L}}{\partial F} || = \mathcal{O}(1)$, and $||\frac{\partial F}{\partial \theta_i} || \le ||\frac{\partial F}{\partial \theta_{2N}} || \overset{\Theta}{=} \frac{||\theta_{2N}||}{\alpha}$, we achieve:
\begin{align}
    \sum_{i=1}^{2N} \cfrac{\sqrt{v_{i}^2 + w_{i}^2}}{\alpha} || \cfrac{\partial \mathcal{L}}{\partial F}||\cdot||\cfrac{\partial F}{\partial \theta_i} || 
    \le  \mathcal{O}(\cfrac{\sqrt{v_{2N}^2 + w_{2N}^2}}{\alpha}\sum_{i=1}^{2N}\cfrac{\sqrt{v_i^2 + w_i^2}}{\alpha}) = \mathcal{O}(1)
\end{align}
Due to symmetry, we set $v_i = v, w_j = w$, so it goes to $2N\frac{v^2 + w^2}{\alpha^2} = 1$. In this work, we use $v = w = (8N)^{-\frac{1}{4}}$ and $\alpha = (2N)^{\frac{1}{4}}$ to satisfy the condition.

\section{Experimental Details}

\subsection{Hyperparameters for IWSLT-14 De-En}

\begin{table*}[htbp]
\begin{center}
\begin{tabular}{l|c}
\toprule
\textbf{Hyperparameters} &  \textbf{Value} \\
\midrule
Learning rate & 5e-4 \\
Learning rate scheduler & inverse sqrt \\
Warm-up updates & 4000 \\
Warm-up init learning rate & 1e-7 \\
Max tokens & 4000 \\
Adam $\epsilon$ & 1e-8 \\
Adam $\beta$ & (0.9, 0.98) \\
Label smoothing & 0.1 \\
Training updates & 8K \\
\midrule
Gradient clipping & 0.0 \\
Dropout & 0.4 \\
Weight decay & 0.0001 \\
\midrule
Hidden size & 512 \\
FFN inner hidden size & 2048 \\
Attention heads & 8 \\
\bottomrule
\end{tabular}
\caption{Hyperparameters for the machine translation experiments on the IWSLT-14 De-En dataset.}
\end{center}
\end{table*}

\subsection{Hyperparameters for WMT-17 En-De}

\begin{table*}[htbp]
\begin{center}
\begin{tabular}{l|cccc}
\toprule
\textbf{Hyperparameters} &  \textbf{No-LN} & \textbf{\preln{}} &  \textbf{\postln{}} & \textbf{\deepnorm{}} \\
\midrule
Learning rate & 5e-4 & 1.5e-3 & 1.5e-3 & 1.5e-3 \\
Learning rate scheduler & \multicolumn{4}{c}{inverse sqrt} \\
Warm-up updates & \multicolumn{4}{c}{4000} \\
Warm-up init learning rate & \multicolumn{4}{c}{1e-7} \\
Max tokens & \multicolumn{4}{c}{128 $\times$ 4096} \\
Adam $\epsilon$ & \multicolumn{4}{c}{1e-8} \\
Adam $\beta$ & \multicolumn{4}{c}{(0.9, 0.98)} \\
Label smoothing & \multicolumn{4}{c}{0.1} \\
Training updates & \multicolumn{4}{c}{100K} \\
\midrule
Gradient clipping & \multicolumn{4}{c}{0.0} \\
Dropout & \multicolumn{4}{c}{0.4} \\
Weight decay & \multicolumn{4}{c}{0.0001} \\
\midrule
Hidden size & \multicolumn{4}{c}{512} \\
FFN inner hidden size & \multicolumn{4}{c}{2048} \\
Attention heads & \multicolumn{4}{c}{8} \\
\bottomrule
\end{tabular}
\caption{Hyperparameters for the base-setting experiments on the WMT-17 En-De dataset.}
\end{center}
\end{table*}

\begin{table*}[htbp]
\begin{center}
\begin{tabular}{l|ccc}
\toprule
\textbf{Hyperparameters} &  \textbf{Base size} & \textbf{Medium size} &  \textbf{Large size}\\
\midrule
Hidden size & 512 & 768 & 1,024 \\
FFN inner hidden size & 2048 & 3072 & 4096 \\
Attention heads & 8 & 12 & 16 \\
Layers & \multicolumn{3}{c}{18-18} \\
\midrule
Learning rate & \multicolumn{3}{c}{5e-4} \\
Learning rate scheduler & \multicolumn{3}{c}{inverse sqrt} \\
Warm-up updates & \multicolumn{3}{c}{4000} \\
Warm-up init learning rate & \multicolumn{3}{c}{1e-7} \\
Max tokens & \multicolumn{3}{c}{128 $\times$ 4096} \\
Adam $\epsilon$ & \multicolumn{3}{c}{1e-6} \\
Adam $\beta$ & \multicolumn{3}{c}{(0.9, 0.98)} \\
Label smoothing & \multicolumn{3}{c}{0.1} \\
Training updates & \multicolumn{3}{c}{30K} \\
\midrule
Gradient clipping & \multicolumn{3}{c}{1.0} \\
Dropout & \multicolumn{3}{c}{0.4} \\
Weight decay & \multicolumn{3}{c}{0.0} \\
\bottomrule
\end{tabular}
\caption{Hyperparameters for the large-setting experiments on the WMT-17 En-De dataset.}
\end{center}
\end{table*}

\newpage
\subsection{Hyperparameters for OPUS-100}

\begin{table*}[htbp]
\begin{center}
\begin{tabular}{l|c}
\toprule
\textbf{Hyperparameters} &  \textbf{Value} \\
\midrule
Learning rate & 5e-4 \\
Learning rate scheduler & inverse sqrt \\
Warm-up updates & 4000 \\
Warm-up init learning rate & 1e-7 \\
Max tokens & 128 $\times$ 4096 \\
Adam $\epsilon$ & 1e-8 \\
Adam $\beta$ & (0.9, 0.98) \\
Label smoothing & 0.1 \\
Training epochs & 4 \\
\midrule
Gradient clipping & 0.0 \\
Dropout & 0.1 \\
Weight decay & 0.0 \\
\midrule
Hidden size & 512 \\
FFN inner hidden size & 2048 \\
Attention heads & 8 \\
\bottomrule
\end{tabular}
\caption{Hyperparameters for the machine translation experiments on the OPUS-100 dataset.}
\end{center}
\end{table*}

\newpage
\subsection{Hyperparameters for 102-Language Machine Translation}

\begin{table*}[htbp]
\begin{center}
\begin{tabular}{l|c}
\toprule
\textbf{Hyperparameters} &  \textbf{Value} \\
\midrule
Learning rate & 5e-4 \\
Learning rate scheduler & inverse sqrt \\
Warm-up updates & 6000 \\
Warm-up init learning rate & 1e-7 \\
Max tokens & 256 $\times$ 4096 \\
Adam $\epsilon$ & 1e-6 \\
Adam $\beta$ & (0.9, 0.98) \\
Label smoothing & 0.1 \\
Training updates & 260K \\
\midrule
Gradient clipping & 1.0 \\
Dropout & 0.1 \\
Weight decay & 0.0 \\
\midrule
Hidden size & 1024 \\
FFN inner hidden size & 4096 \\
Attention heads & 16 \\
Layers & 100-100 \\
\bottomrule
\end{tabular}
\caption{Hyperparameters for the machine translation experiments on the 102-language dataset.}
\end{center}
\end{table*}

\subsection{Evaluation Details}

For IWSLT-14 and WMT-17, we use the in-built BLEU scripts of Fairseq to report the scores.
Besides, we report the case-sensitive detokenized BLEU using sacreBLEU~\citep{sacrebleu} for the results of OPUS-100.\footnote{BLEU+case.mixed+lang.\{src\}-\{tgt\}+numrefs.1+smooth.exp+tok.13a+version.1.4.14}

For WMT, OPUS, and TED, we use the same test sets and evaluation scripts as in M2M~\citep{m2m100}, and the results of M2M are directly from the paper~\citep{m2m100}. For the Flores-101 evaluation set, we report the spBLEU\footnote{\url{https://github.com/facebookresearch/flores}} of M2M-12B with the public checkpoint and script.\footnote{\url{https://github.com/pytorch/fairseq/tree/main/examples/m2m_100}}

\newpage
\section{Experimental Results in \cref{mnmt}}

\begin{figure}[htbp]
  \centering
  \includegraphics[width=1.0\columnwidth]{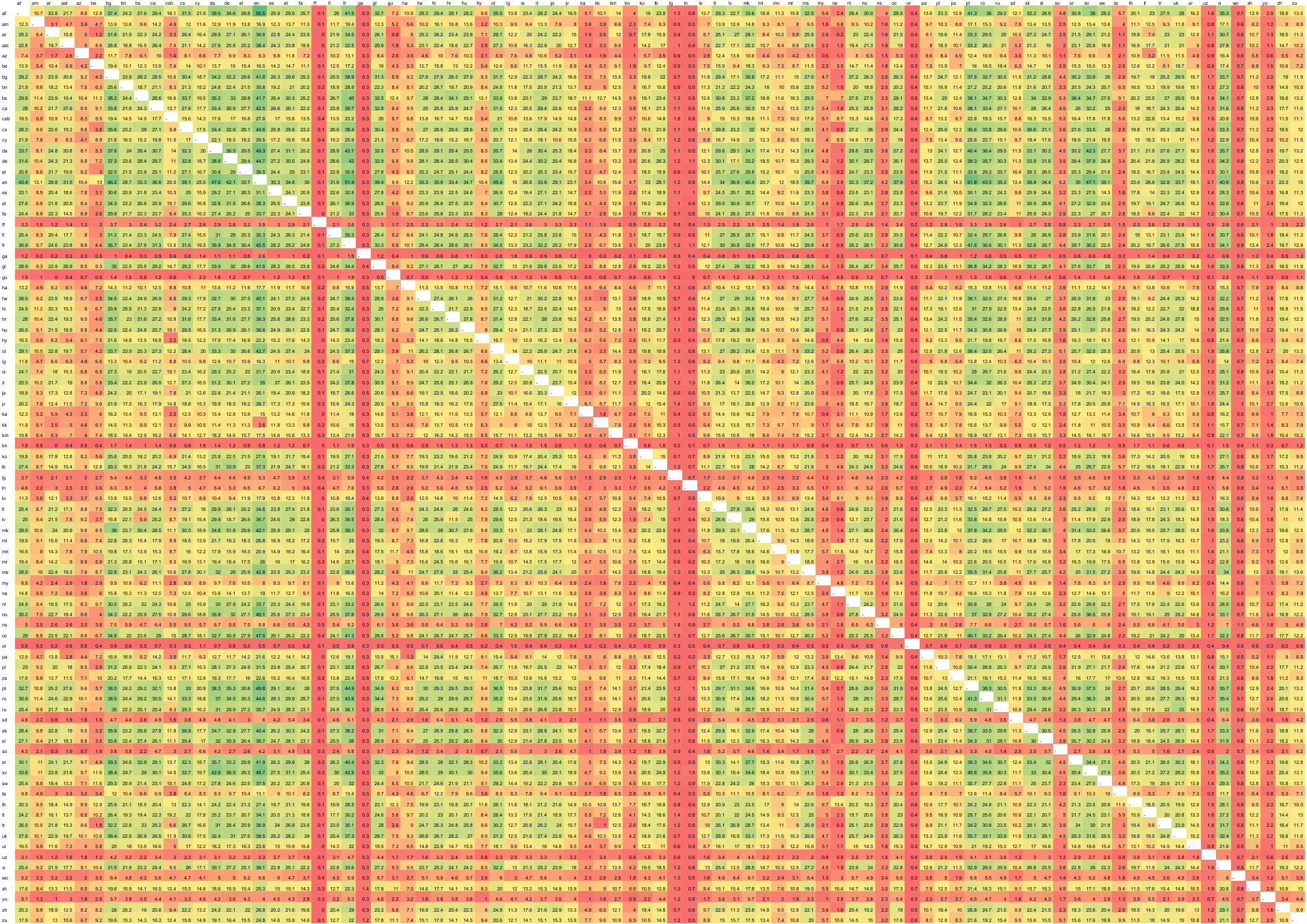}
  \caption{Evaluation results of 12B M2M-100 on a subset of FLORES-101 devtest set. The $i$-th row is the source language, while $j$-th column is the target language. There are 87 languages and 7,482 directions.}
\end{figure}

\begin{figure}[htbp]
  \centering
  \includegraphics[width=1.0\columnwidth]{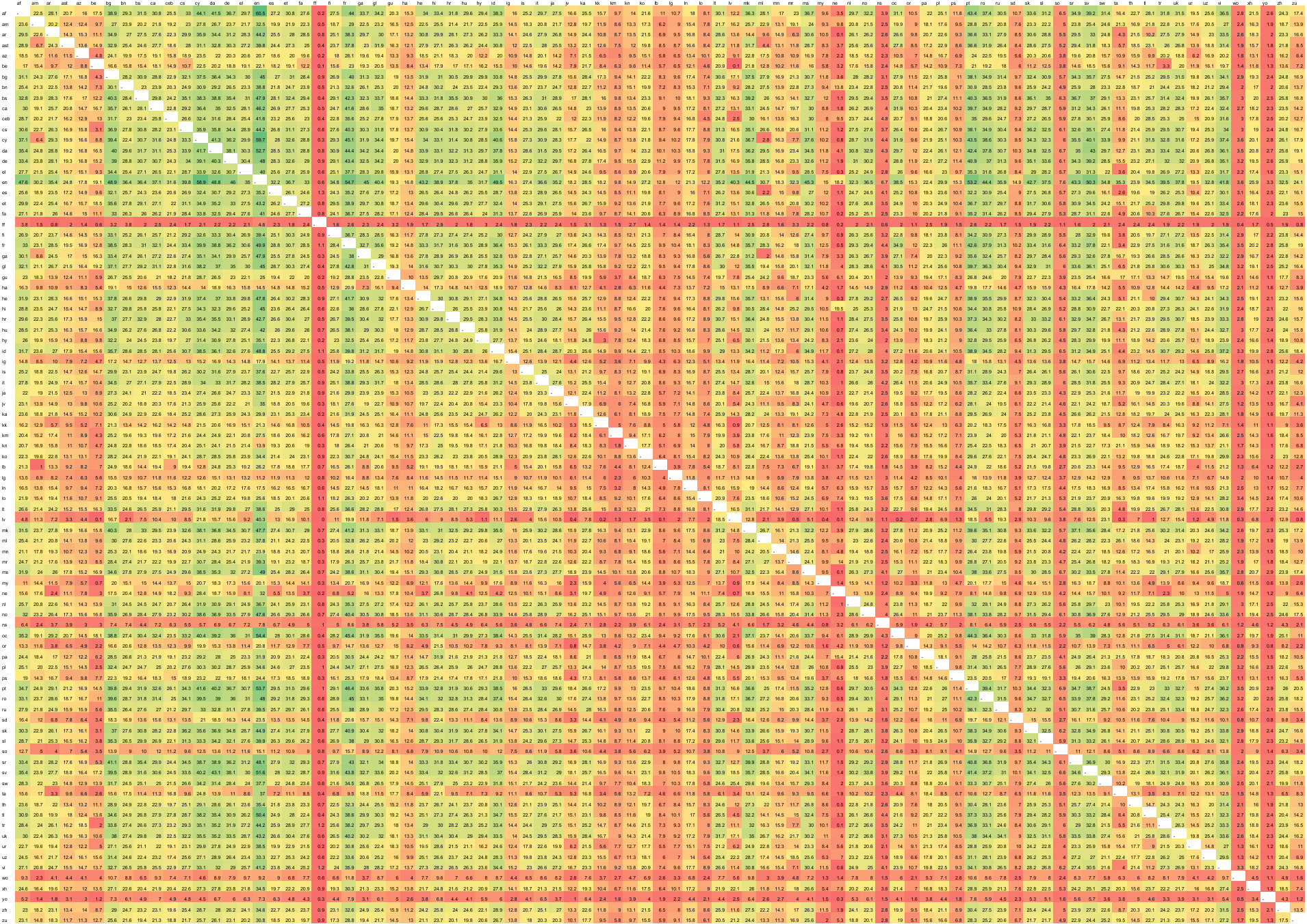}
  \caption{Evaluation results of 3.2B \our{} on a subset of FLORES-101 devtest set. The $i$-th row is the source language, while $j$-th column is the target language. There are 87 languages and 7,482 directions.}
\end{figure}

\end{document}